\newcommand*{\supp}{\mathrm{supp}}
\newcommand \ru{\mathrm{u}}
\newcommand{\nn}{\nonumber}
\newcommand{\rF}{{\rm F}}
\newcommand{\reff}{{\rm eff}}
\def\##1\#{\begin{align}#1\end{align}}
\def\$#1\${\begin{align*}#1\end{align*}}
\newcommand {\vecc}{\textnormal {vec}}
\def\T{{ \mathrm{\scriptscriptstyle T} }}
\newcommand{\Rom}[1]{\text{\uppercase\expandafter{\romannumeral #1\relax}}}
\begin{document}

\title{\LARGE Robust Sparse Reduced Rank Regression in High Dimensions}
\author{Kean Ming Tan, Qiang Sun~and~Daniela Witten}

\date{\today}

\maketitle

\begin{abstract}
We propose robust sparse reduced rank regression for analyzing large and complex high-dimensional data with heavy-tailed random noise.  
The proposed method is based on a convex relaxation of  a rank- and sparsity-constrained non-convex optimization problem, which is then solved using the alternating direction method of multipliers algorithm. 
We establish non-asymptotic estimation error bounds under both Frobenius and nuclear norms in the high-dimensional setting.
This is a major contribution over existing results in reduced rank regression, which mainly focus on rank selection and prediction consistency.  
 Our theoretical results quantify the tradeoff between heavy-tailedness of the random noise and statistical bias.  
 For random noise with bounded $(1+\delta)$th moment with $\delta \in (0,1)$, the rate of convergence is a function of $\delta$, and is slower than the sub-Gaussian-type deviation bounds; for random noise with bounded second moment, we obtain a rate of convergence as if sub-Gaussian noise were assumed.   Furthermore, the transition between the two regimes is smooth. 
We illustrate the performance of the proposed method via extensive numerical studies and a data application.

\end{abstract}

\noindent {\bf Keywords:}
Huber loss; convex relaxation; tail robustness; low rank approximation;  sparsity.

\section{Introduction}
\label{sec:introduction}
Low rank matrix approximation methods have enjoyed successes in modeling and extracting information from  large and complex data across various scientific disciplines. 
However, large-scale data sets are often accompanied by outliers due to possible measurement error, or because the population exhibits a leptokurtic distribution. 
As shown in \citet{she2017robust}, one single outlier can have a devastating effect on low rank matrix estimation. 
Consequently, non-robust procedures for low rank matrix estimation could lead to inferior estimates and spurious scientific conclusions. 
For instance, in the context of financial data, it is evident that asset prices follow heavy-tailed distributions: if the heavy-tailedness is not accounted for in statistically modeling, then the recovery of common market behaviors and asset return forecasting may be jeopardized \citep{cont2001empirical,muller1998heavy}.

In the context of reduced rank regression, \cite{she2017robust} addressed this challenge by explicitly modeling the outliers with a sparse mean shift matrix of parameters.
This approach requires an augmentation of the parameter space, which introduces a new statistical challenge:  it raises possible identifiability issues between the parameters of interest and the mean shift  parameters.   
For instance, \cite{robustpca} proposed a form of robust principal component analysis by introducing an additional  sparse matrix to model the outliers.   To ensure identifiability, an incoherence condition is assumed on the singular vectors of the original parameter of interest. In other words, the parameter of interest cannot be sparse.  
Therefore, it is unclear whether \citet{she2017robust} can be generalized to the high-dimensional setting in which the number of covariates is larger than the number of observations.
Similar ideas have been considered in the context of robust linear regression  \citep{she2011outlier} and robust clustering  \citep{rcc2016,rss2016}.   

In many statistical applications, the outliers themselves are not of interest. 
Rather than introducing additional parameters to model the outliers, it is more natural to develop robust statistical methods that are less sensitive to outliers.  
There is limited work along these lines in low rank matrix approximation problems. 
In fact, \cite{she2017robust} pointed out that in the context of reduced rank regression, directly applying a robust loss function that down-weights the outliers, such as the Huber loss, may result in nontrivial computational and theoretical challenges due to the low rank constraint. 
So a natural question arises:  can we develop a computationally efficient robust sparse low rank matrix approximation procedure that is less sensitive to outliers and yet has sound statistical guarantees?

In this paper, we propose a novel method for fitting robust sparse reduced rank regression in the high-dimensional setting.  We propose to minimize the Huber loss function subject to both sparsity and rank constraints. This leads to a non-convex optimization problem, and is thus computational intractable. To address this challenge, we consider a convex relaxation, which can be solved via an alternating direction method of multipliers algorithm.  
Most of the existing theoretical analysis of reduced rank regression focuses on rank selection consistency and prediction consistency \citep{bunea2011optimal,mukherjee2011reduced,bunea2012joint,chen2013reduced}. Moreover, the theoretical results for robust reduced rank regression of \citet{she2017robust} are developed under the assumption that the design matrix is low rank.  Non-asymptotic analysis of the estimation error, however, is not well-studied in the context of reduced rank regression, especially in the high-dimensional setting. 
To bridge this gap in the literature, we provide non-asymptotic analysis of the estimation error under  both Frobenius and nuclear norms for robust sparse reduced rank regression.  
Our results require a matrix-type restricted  eigenvalue condition, and are free of incoherence conditions that arise from the identifiability issues discussed in \citet{robustpca}.

The robustness of our proposed estimator is evidenced by its finite sample performance in the presence of heavy-tailed data, i.e., data for which high-order moments are not finite. When the sampling distribution is heavy-tailed, there is a higher chance that some data are sampled far away from their mean. We refer to these outlying data as heavy-tailed outliers. 
Theoretically, we establish non-asymptotic results that quantify the tradeoff between heavy-tailedness of the random noise and statistical bias: for random noise with bounded $(1+\delta)$th moment, the rate of convergence, depending on $\delta$, is slower than the sub-Gaussian-type deviation bounds; for random noise with bounded second moment, we recover results as if sub-Gaussian errors were assumed; and the transition between the two regimes is smooth.

The Huber loss has a \emph{robustification  parameter} that trades bias for robustness. 
In past work, the robustification parameter is usually fixed using the $95\%$-efficiency rule (among others, \citealp{huber1964robust, huber1973robust,portnoy1985asymptotic,mammen1989asymptotics,he1996general}).   
Therefore, estimators obtained under Huber loss are typically biased. 
To achieve asymptotic unbiasedness and robustness simultaneously, within the context of robust linear regression, \cite{sun2016adaptive} showed that the robustification parameter has to adapt to the sample size, dimensionality, and moments of the random noise.
Motivated by \cite{sun2016adaptive}, we will establish theoretical results for the proposed   method by allowing the robustification parameter to diverge.

Heavy-tailed robustness is different from the conventional perspective on robust statistics under the Huber's $\epsilon$-contamination model, which  focuses on developing robust procedures with a high breakdown point \citep{huber1964robust}. 
The breakdown point of an estimator is defined roughly as  the proportion of arbitrary outliers an estimator can tolerate before the estimator produces arbitrarily large estimates, or breaks down \citep{hampel1971general}. 
Since the seminal work of \cite{tukey1975mathematics}, a number of depth-based procedures have been proposed for this purpose (among others, \citealp{liu1990notion, zuo2000general, mizera2002depth,salibian2002bootrapping}). 
Other research directions for robust statistics focus on robust and resistant $M$-estimators: these include the least median of squares and least trimmed squares \citep{rousseeuw1984least}, the S-estimator  \citep{rousseeuw1984robust}, and the MM-estimator \citep{yohai1987high}.   We refer to \cite{portnoy2000robust} for  a literature review on classical robust statistics, and \cite{chen2018robust} for recent developments on non-asymptotic analysis  under the $\epsilon$-contamination model.\\

%%%%%%%%%%%%%%%%%%%%%%%%%%%%%%%
% Notation
%%%%%%%%%%%%%%%%%%%%%%%%%%%%%%%
\noindent \textbf{Notation:} 
For any  vector $\ub = (\ru_1, \ldots, \ru_p)^\T \in \RR^p$ and $q \geq 1$, let $\|\ub\|_q=\big(\sum_{j=1}^p |\ru_j|^q\big)^{1/q}$ denote the $\ell_q$ norm.  
Let $\|\ub\|_0 = \sum_{j=1}^p 1(\ru_j \!\neq\! 0 )$ denote the number of nonzero entries of $\ub$, and let $\|\ub\|_\infty=\max_{1\leq j\leq p}|\ru_j|$. 
For any two vectors $\ub , \vb \in \RR^p$,  let $\langle \ub, \vb \rangle = \ub^\T \vb$. Moreover,  for two sequences of real numbers $\{ a_n \}_{n\geq 1}$ and $\{ b_n \}_{n\geq 1}$, $a_n \lesssim b_n$ signifies that $a_n \leq C b_n$ for some constant $C>0$ that is  independent of $n$, $a_n \gtrsim b_n$ if $b_n \lesssim a_n$, and $a_n \asymp b_n$ signifies that $a_n \lesssim b_n$ and $b_n \lesssim a_n$. If $\Ab$ is an $m\times n$ matrix, we use $\| \Ab \|_q$ to denote its order-$q$ operator norm, defined by $\| \Ab \|_q = \max_{ \ub \in \RR^n} \| \Ab \ub \|_q/\|\ub\|_q$. We define the $(p,q)$-norm of a $m\times n$ matrix $\Ab$ as the usual $\ell_q$ norm of the vector of row-wise $\ell_p$ norms  of $\Ab$: $\big\|\Ab\big\|_{p,q}\equiv \big\|\big(\|\Ab_{1\cdot}\|_p, \ldots, \|\Ab_{m\cdot}\|_{p})\big\|_q$, where  $\Ab_{j\cdot}$ is the $j$th row of $\Ab$. 
We use $\|\Ab\|_* = \sum_{k=1}^{\min\{m,n\}} \lambda_{k}$ to denote the nuclear norm of $\Ab$, where $\lambda_k$ is the $k$th singular value of $\Ab$.
Let $\|\Ab\|_\rF =  \sqrt{\sum_{i=1}^m\sum_{j=1}^n A_{ij}^2}$ be the Frobenius norm of $\Ab$. 
Finally, let $\vecc(\Ab)$  be the vectorization of the matrix $\Ab$, obtained by concatenating the columns of $\Ab$ into a vector. 

%%%%%%%%%%%%%%%%%%%%%%%%%%
%%%%%%%%%%%%%%%%%%%%%%%%%%
% Robust Matrix Regression
%%%%%%%%%%%%%%%%%%%%%%%%%%
%%%%%%%%%%%%%%%%%%%%%%%%%%
\section{Robust Sparse Reduced Rank Regression}
\label{sec:method} 
\subsection{Formulation}
Suppose we observe $n$ independent samples of $q$-dimensional response variables and $p$-dimensional covariates.
Let $\Yb \in \RR^{n\times q}$ be the observed response and let $\Xb\in \RR^{n\times p}$ be the observed covariates. 
We consider the matrix regression model 
\begin{equation}\label{eq:main}
\Yb = \Xb \Ab^*+ \Eb,
\end{equation}
where $\Ab^* \in \RR^{p\times q}$ is the underlying regression coefficient matrix and  
$\Eb\in \RR^{n\times q}$ is an error matrix.  Each row of $\Eb$ is an independent mean-zero and potentially heavy-tailed random noise vector. 

Reduced rank regression seeks to characterize the relationships between $\Yb$ and $\Xb$ in a parsimonious way by restricting  the rank of $\Ab^*$ \citep{izenman1975reduced}.  
An estimator of $\Ab^*$ can be obtained by solving the  optimization problem
\begin{equation}
\label{eq:optimization1}
\underset{\Ab\in \RR^{p\times q}}{\mathrm{minimize}}~  \mathrm{tr} \left\{ (\Yb-\Xb \Ab)^\T (\Yb-\Xb \Ab) \right\}, \qquad \mathrm{subject~to~} \mathrm{rank}(\Ab) \le r,
\end{equation}
where $r$ is typically much smaller than $\min\{n, p,q\}$. 
Due to the rank constraint on $\Ab$, \eqref{eq:optimization1} is non-convex: nonetheless, the global solution of \eqref{eq:optimization1} has a closed form solution \citep{izenman1975reduced}.

It is well-known that squared error loss is sensitive to outliers or heavy-tailed random error  \citep{huber1973robust}.  To address this issue, it is natural to substitute the squared error loss with a loss function that is robust against outliers.  We propose to estimate $\Ab^*$ under the Huber loss function, formally defined as follows.
\begin{definition}[{\sf Huber Loss and Robustification Parameter}] \label{Huber.def}
The Huber loss $\ell_\tau(\cdot)$  is defined as
$$
	\ell_\tau(z) =
	\left\{\begin{array}{ll}
	\frac{1}{2}z^2 ,    & \mbox{if } |z | \leq \tau ,  \\
	\tau |z | - \frac{1}{2} \tau^2,   &  \mbox{if }  |z | > \tau ,
	\end{array}  \right. 	
$$	
where $\tau>0$ is referred to as the {\it robustification parameter} that trades bias for robustness.
\end{definition}

 The Huber loss function blends the squared error loss ($|z| \le \tau$) and the absolute deviation loss ($|z|>\tau$), as determined by  the robustification parameter $\tau$.  
Compared to the squared error loss, large values of $z$ are down-weighted under the  Huber loss, thereby resulting in robustness.  Generally,  an estimator obtained from minimizing the Huber loss is biased.
The robustification parameter $\tau$ quantifies the tradeoff between bias and robustness: a smaller value of $\tau$ introduces more bias but also encourages the estimator to be more robust to outliers.  
We will provide guidelines for selecting $\tau$ based on the sample size and the dimensions of $\Ab^*$ in later sections.  
Throughout the paper, for $\Mb\in \RR^{p\times q}$, we write  $\ell_\tau (\Mb)=\sum_{i=1}^p\sum_{j=1}^q \ell_\tau(M_{ij})$ for notational convenience.

In the high-dimensional setting in which $n < p$ or $n < q$, it is theoretically challenging to estimate $\Ab^*$ accurately without imposing additional structural assumptions in addition to the low rank assumption.  
To address this challenge, \citet{chen2012reduced} and \citet{chen2012sparse} proposed methods for simultaneous dimension reduction and variable selection. 
In particular, they decomposed $\Ab^*$ into the product of its singular vectors, and imposed sparsity-inducing penalty on the left and right singular vectors. 
Thus, their proposed methods involve solving optimization problems with non-convex objective.

Given that the goal is to estimate $\Ab^*$  rather than its singular vectors, we propose to estimate $\Ab^*$ directly.  
Under the Huber loss, a robust and sparse estimate of $\Ab^*$ can be obtained by solving the  optimization problem:
\begin{equation}
\label{eq:opt:conv-non}
\underset{\Ab\in \RR^{p\times q}}{\mathrm{minimize}} \;\bigg\{{\frac{1}{n}\ell_\tau  \left(\Yb-\Xb \Ab \right)}\bigg\},
\qquad \mathrm{subject~to~} \mathrm{rank}(\Ab) \le r\quad \mathrm{and} \quad \mathrm{card}(\Ab)\le k,
\end{equation}
where $\mathrm{card}(\Ab)$ is the number of non-zero elements in $\Ab$.
Optimization problem \eqref{eq:opt:conv-non} is non-convex due to the rank and cardinality constraints on $\Ab$.
We instead propose to estimate $\Ab^*$ by solving the following convex relaxation:
\begin{equation}
\label{eq:opt:conv}
\underset{\Ab\in \RR^{p\times q}}{\mathrm{minimize}} \;\bigg\{{\frac{1}{n}\ell_\tau  \left(\Yb-\Xb \Ab \right)}+\lambda \left( \|\Ab\|_*+\gamma\|\Ab\|_{1,1}\right)\bigg\},
\end{equation}
where $\lambda$ and $\gamma$ are non-negative tuning parameters,  $\|\cdot\|_*$ is the nuclear norm that encourages the solution to be low rank, and $\|\cdot\|_{1,1}$ is the entry-wise $\ell_1$-norm that encourages the solution to be sparse. 
The nuclear norm and the $\ell_{1,1}$ norm constraints are the tightest convex relaxations of the rank and cardinality constraints, respectively \citep{MaryamNuclear, jojic2011convex}. 
In Section \ref{theory}, we will show that the estimator obtained from solving the convex relaxation in \eqref{eq:opt:conv} has a favorable statistical convergence rate under a bounded moment condition on the random noise.

%%%%%%%%%%%%%%%%%%%%%%%%%%%%%%%%
%%%%%%%%%%%%%%%%%%%%%%%%%%%%%%%%
% Algorithm
%%%%%%%%%%%%%%%%%%%%%%%%%%%%%%%%
%%%%%%%%%%%%%%%%%%%%%%%%%%%%%%%%
\subsection{Algorithm}
We now develop an alternating direction method of multipliers (ADMM) algorithm for solving \eqref{eq:opt:conv}, which allows us to decouple some of the terms that are difficult to optimize jointly \citep{EcksteinADMM92,BoydADMM}.
More specifically, (\ref{eq:opt:conv}) is equivalent to 
\begin{equation}
\label{eq:opt:conv2}
\begin{split}
&\underset{\Ab,\Zb,\Wb \in \RR^{p\times q}, \Db \in \RR^{n\times q}}{\mathrm{minimize}} ~~~\bigg\{{\frac{1}{n}\ell_\tau \left(\Yb- \Db \right)}+\lambda \left( \|\Wb\|_*+\gamma\|\Zb\|_{1,1}\right)\bigg\},\\
& \textnormal{subject to} ~~ \begin{pmatrix} \Db \\ \Zb \\ \Wb  \end{pmatrix}=\begin{pmatrix}  \Xb  \\ \Ib \\ \Ib  \end{pmatrix}\Ab.
\end{split}
\end{equation}
For notational convenience, let $\Bb = (\Bb_D,\Bb_Z,\Bb_W)^\T$, $\tilde{\Xb} = ( \Xb,\Ib,\Ib)^\T$, and $\bOmega = (\Db,\Zb,\Wb)^\T$.
The scaled augmented Lagrangian of (\ref{eq:opt:conv2}) takes the form
\[
\cL_\rho(\Ab,\Db,\Zb,\Wb,\Bb) =  \frac{1}{n} \ell_{\tau} (\Yb-\Db)+\lambda \left( \|\Wb\|_*+\gamma\|\Zb\|_{1,1}\right)+\frac{\rho}{2} \|\bOmega-\tilde{\Xb}\Ab+\Bb\|_{\rF}^2,
\]
where $\Ab, \Db, \Zb, \Wb$ are the primal variables, and $\Bb$ is the dual variable.
Algorithm~\ref{Alg:general} summarizes the ADMM algorithm for solving (\ref{eq:opt:conv2}). A detailed derivation is deferred to Appendix~\ref{appendix:alg}.
Note that the term $(\tilde{\Xb}^\T \tilde{\Xb})^{-1}$ can be calculated before Step 2 in Algorithm \ref{Alg:general}.
Therefore, the computational bottleneck in each iteration of Algorithm~\ref{Alg:general} is the singular value decomposition of a $p\times q$ matrix with computational complexity $\cO(p^2q + q^3)$.

%%%%%%%%%%%%%%%%%%%%
%%%%%%%%%%%%%%%%%%%%
% Algorithm
%%%%%%%%%%%%%%%%%%%%
%%%%%%%%%%%%%%%%%%%%
\begin{algorithm}[!t]
\small
\caption{An ADMM Algorithm for Solving \eqref{eq:opt:conv2}.}
\label{Alg:general}
\begin{enumerate}
\item  \textbf{Initialize} the parameters: 
\begin{enumerate}
\item primal variables $\Ab, \Db, \Zb$, and $\Wb$ to the zero matrix.
\item dual variables $\Bb_D,\Bb_Z$, and $\Bb_W$ to the zero matrix.
\item  constants $\rho>0$ and $\epsilon>0$. 
\end{enumerate}
\item  \textbf{Iterate} until the stopping criterion ${\|\Ab^{t}- \Ab^{t-1} \|_\rF^2}/{\| \Ab^{t-1}\|_\rF^2} \le \epsilon$ is met, where $\Ab^t$ is the value of $\Ab$ obtained at the $t$th iteration:

\begin{enumerate}
\item Update ${\Ab}, \Zb, \Wb, \Db$:
\begin{enumerate}
\item $\Ab = (\tilde{\Xb}^{\T} \tilde{\Xb})^{-1} \tilde{\Xb}^{\T}(\bOmega +\Bb)$.
\item $\Zb = S(\Ab-\Bb_Z,\lambda \gamma /\rho)$.  Here $S$ denote the soft-thresholding operator, applied element-wise to a matrix:  $S(A_{ij},b) = \text{sign}(A_{ij}) \max( |A_{ij}|-b, 0)$.
\item  $\Wb = \sum_{j}   \max \left( \omega_j - \lambda/\rho  ,0   \right) \ab_j \mathbf{b}_j^\T $, where $\sum_{j}\omega_j \ab_j  \mathbf{b}_j^\T$ is the singular value decomposition of $\Ab-\Bb_W$.  
\item $\Cb = \Xb\Ab - \Bb_D$.  Set
\[
D_{ij} =\begin{cases} (Y_{ij}+n\rho C_{ij})/({1+n\rho}), & \mathrm{if} ~ \left| {n\rho (Y_{ij}-C_{ij})}/({1+n\rho})\right| \le \tau,\\
Y_{ij} - S(Y_{ij} -C_{ij},\tau/(n\rho)), & \mathrm{otherwise}.
\end{cases}
\]

\end{enumerate}

\item Update  $\Bb_D, \Bb_Z, \Bb_W$:
\begin{enumerate}
\item $\Bb_D =\Bb_D + \Db -  \Xb \Ab$;   \;  \;\;  ii. $\Bb_Z =\Bb_Z + \Zb -  \Ab$; \;  \;\;  iii. $\Bb_W =\Bb_W + \Wb -  \Ab$.
\end{enumerate}

\end{enumerate}
\end{enumerate}
\end{algorithm}

%%%%%%%%%%%%%%%%%%%%%%%%%%%%%%%%%%%%%%%%
%%%%%%%%%%%%%%%%%%%%%%%%%%%%%%%%%%%%%%%%
% Statistical Theory
%%%%%%%%%%%%%%%%%%%%%%%%%%%%%%%%%%%%%%%%
%%%%%%%%%%%%%%%%%%%%%%%%%%%%%%%%%%%%%%%%
\section{Statistical Theory}
\label{theory}
We study the theoretical properties of $\hat{\Ab}$ obtained from  solving~(\ref{eq:opt:conv}).
Let $\VV_{p,q} = \{\Ub\in \RR^{p\times q} : \Ub^\T \Ub = \Ib_q\} $ be the Stiefel manifold of $p\times q$ orthonormal matrices. 
Throughout the theoretical analysis, we assume that $\Ab^*$ can be decomposed as
\begin{equation}
\label{Eq:star}
\Ab^*=\Ub^*\bLambda^*(\Vb^*)^\T=\sum_{k=1}^{r} \lambda_k^* \ub_k^* (\vb_k^*)^\T, 
\end{equation}
where $\Ub^*\in \VV_{p,r}$,  $\Vb^*\in \VV_{q,r}$,   $\max_{k} \|\ub_k^*\|_0 \le s_u$, and $\max_k \|\vb_k^*\|_{0} \le s_v$ with $s_u, s_v \ll n$, $r \ll n$, and $r s_u s_v \ll n$.
Consequently, $\Ab^*$ is sparse and low rank. 
Let  $\cS = \mathrm{supp}(\Ab^*)$ be the support set of $\Ab^*$ with cardinality $|\cS|=s$, i.e., $\cS$ contains indices for the non-zero elements in $\Ab^*$. 
Note that $s \le r s_u s_v$.

For simplicity, we consider the case of fixed design matrix $\Xb$ and assume that the covariates are 
standardized such that $\max_{i,j}|X_{ij}|= 1$.  
To characterize the heavy-tailed random noise, we impose a bounded moment condition on the random noise. 
%%%%%%%%%%%%%%%%%%%%%%%%%%%%
% Heavy tailedness
%%%%%%%%%%%%%%%%%%%%%%%%%%%%
\begin{condition}[\sf Bounded Moment Condition]
\label{condition:error}
For $\delta > 0$, each entry of the random error matrix $\Eb$ in \eqref{eq:main} has bounded $(1+\delta)$th moment 
\$
v_{\delta}\equiv \max_{i,j} \EE\big(|E_{ij}|^{1+\delta}\big)<\infty.
\$ 
\end{condition}
\noindent  Condition~\ref{condition:error}  is a relaxation of the commonly used sub-Gaussian assumption to  accommodate heavy-tailed random noise. 
For instance, the $t$-distribution with degrees of freedom larger than one can be accommodated by the bounded moment condition. 
This condition has also been used in the context of high-dimensional Huber linear regression  \citep{sun2016adaptive}.

Let $\Hb_\tau (\Ab)$ be the Hessian matrix of the Huber loss function $\ell_\tau  \left(\Yb-\Xb \Ab \right)/n$ in \eqref{eq:opt:conv2}.
In addition to the random noise, the Hessian matrix is a function of the parameter $\Ab$, and  $\Hb_\tau (\Ab)$ may equal zero for some $\Ab$, because the Huber loss is linear at the tails. 
To avoid singularity of $\Hb_\tau(\Ab)$, we will study the Hessian matrix in a local neighborhood of $\Ab^*$.   
To this end, we define and impose conditions on the localized restricted eigenvalues of $\Hb_\tau(\Ab)$.
%%%%%%%%%%%%%%%%%%%%%%%%%%%%
% Restricted Eigenvalue condition
%%%%%%%%%%%%%%%%%%%%%%%%%%%%
\begin{definition}[\sf Localized Restricted Eigenvalues]
\label{def:restricted}
  The minimum and maximum localized restricted eigenvalues  for $\Hb_{\tau}(\Ab)$ are defined as
\$
\kappa_{-} (\Hb_{\tau} (\Ab),\xi,\eta) 
&=  \underset{\Ub,\Ab}{\inf} \left\{
\frac{\vecc(\Ub)^\T \Hb_{\tau}(\Ab)\vecc(\Ub)}{\|\Ub\|_\rF^2}:  (\Ab,\Ub) \in \cC  (m,\xi,\eta) 
\right\},\\
\kappa_{+} (\Hb_{\tau} (\Ab),\xi,\eta) 
&=  \underset{\Ub,\Ab}{\sup} \left\{
\frac{\vecc(\Ub)^\T \Hb_{\tau}(\Ab)\vecc(\Ub)}{\|\Ub\|_\rF^2}:  (\Ab,\Ub) \in \cC  (m,\xi,\eta) 
\right\},
\$
where 
\[
\mathcal{C}(m,\xi,\eta) \!=\! \{  (\Ab,\Ub)\in \RR^{p\times q}\times \RR^{p\times q}:  \Ub \ne \mathbf{0}, \cS \subseteq J, |J| \le m, \|\Ub_{\cS^c}\|_{1,1} \le \xi \| \Ub_{\cS}  \|_{1,1}, \|\Ab-\Ab^*  \|_{1,1} \le \eta  \}
\]
 is a \emph{local $\ell_{1,1}$-cone}.
\end{definition}

\begin{condition}
\label{condition:lre}
There exist constants $0 < \kappa_{\mathrm{lower}}\le \kappa_{\mathrm{upper}} <\infty$ such that the localized restricted eigenvalues of $\Hb_{\tau}$ are lower-and upper-bounded by
\$
 \kappa_{\mathrm{lower}}/2\le\kappa_{-} (\Hb_{\tau} (\Ab),\xi,\eta)  \le \kappa_{+} (\Hb_{\tau} (\Ab),\xi,\eta) \le \kappa_{\mathrm{upper}}.
\$
\end{condition}

A similar type of localized condition was proposed  in \cite{fan2015tac} for general loss functions and in \cite{sun2016adaptive} for the analysis of robust linear regression in high dimensions. 
In what follows, we justify Condition~\ref{condition:lre} by showing that it is implied by the restricted eigenvalue condition on the empirical Gram matrix $\mathbf{S}= \Xb^\T \Xb /n$.
To this end, we define the restricted eigenvalues of a matrix and then place a condition on the restricted eigenvalues of $\mathbf{S}$.

%%%%%%%%%%%%%%%%%%%%%%%%%%%%
% Restricted Eigenvalue condition
%%%%%%%%%%%%%%%%%%%%%%%%%%%%
\begin{definition}[\sf Restricted Eigenvalues of a Matrix]
\label{def:restricted eigenvalue}
 Given $\xi>1$, the minimum and maximum restricted  eigenvalues of $\mathbf{S}$ are defined as 
\[
\rho_{-} ({\mathbf{S}},\xi,m) =  \underset{\Ub}{\inf} \left\{
\frac{\mathrm{tr}(\Ub^\T {\mathbf{S}}\Ub)}{\|\Ub\|_{1,2}^2} :\Ub \in \RR^{p\times q}, \Ub \ne \mathbf{0}, \cS \subseteq J, |J|\le m, \|\Ub_{J^c}\|_{1,1} \le \xi \| \Ub_{J}  \|_{1,1}
\right\},
\]
\[
\rho_{+} ({\mathbf{S}},\xi,m) =  \underset{\Ub}{\sup} \left\{
\frac{\mathrm{tr}(\Ub^{\T}{\mathbf{S}}\Ub)}{\|\Ub\|_{1,2}^2} :\Ub \in \RR^{p\times q}, \Ub \ne \mathbf{0}, \cS \subseteq J, |J|\le m, \|\Ub_{J^c}\|_{1,1} \le \xi \| \Ub_{J}  \|_{1,1}
\right\},
\]
respectively.
\end{definition}

%%%%%%%%%%%%%%%%%%%%%%%%%%%%
% Condition
%%%%%%%%%%%%%%%%%%%%%%%%%%%%
\begin{condition}
\label{condition2}
There exist constants $0<\kappa_{\mathrm{lower}} \le \kappa_{\mathrm{upper}}<\infty$ such that 
the restricted eigenvalues of $\mathbf{S}$ are lower- and upper-bounded by 
\[
 \kappa_{\mathrm{lower}} \le \rho_{-} ({\mathbf{S}},\xi,m)\le\rho_{+} ({\mathbf{S}},\xi,m) \le \kappa_{\mathrm{upper}}.
 \]
\end{condition}

\noindent Condition~\ref{condition2} is a variant of the restricted eigenvalue condition that is commonly used in high-dimensional non-asymptotic analysis. It can be shown that Condition~\ref{condition2} holds with high probability if each row of $\Xb$ is a sub-Gaussian random vector.

 %%%%%%%%%%%%%%%%%%%%%%%%%%%%%%%%%%%%
% Localized restricted condition
%%%%%%%%%%%%%%%%%%%%%%%%%%%%%%%%%%%%
Under Condition~\ref{condition2}, we now show that the localized restricted eigenvalues for the Hessian matrix are bounded with high probability under conditions on the robustification parameter $\tau$ and the sample size $n$. That is, we prove that the localized restricted eigenvalues condition in Condition~\ref{condition:lre} holds with high probability under Condition~\ref{condition2}.
 The result is summarized in the following lemma. 
\begin{lemma}
\label{lemma:localized restricted}
Consider $\Ab \in \cC(m,\xi,\eta)$ where $\cC(m,\xi,\eta)$ is the local $\ell_{1,1}$-cone as defined in Definition~\ref{def:restricted}.  Let $\tau \ge \min(8\eta, C \cdot (m \nu_{\delta})^{1/(1+\delta)})$ and let $n > C'\cdot m^2 \log(pq)$ for sufficiently large constants $C,C'>0$.
Under Conditions~\ref{condition:error} and \ref{condition2}, there exists constants $\kappa_{\mathrm{lower}}$ and $\kappa_{\mathrm{upper}}$ such that the localized restricted eigenvalues of $\Hb_{\tau} (\Ab)$ satisfy 
\[
0 < \kappa_{\mathrm{lower}}/2\le\kappa_{-} (\Hb_{\tau} (\Ab),\xi,\eta)  \le \kappa_{+} (\Hb_{\tau} (\Ab),\xi,\eta) \le \kappa_{\mathrm{upper}} < \infty
\]
 with probability at least $1-(pq)^{-1}$.
\end{lemma}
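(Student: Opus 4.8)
The plan is to compute the Hessian quadratic form explicitly, split it into a ``main'' term controlled by the Gram matrix $\mathbf{S}$ and a nonnegative ``defect'' coming from the observations with large residuals, and then bound the defect. Since $\ell_\tau\in C^1$ with $\ell_\tau''(z)=\mathbf{1}\{|z|\le\tau\}$ for $z\ne\pm\tau$, a direct computation gives, for every $\Ub\in\RR^{p\times q}$,
\[
\vecc(\Ub)^\T\Hb_\tau(\Ab)\vecc(\Ub)=\frac1n\sum_{i=1}^n\sum_{j=1}^q\mathbf{1}\{|R_{ij}(\Ab)|\le\tau\}\,(\Xb\Ub)_{ij}^2,\qquad R_{ij}(\Ab):=Y_{ij}-(\Xb\Ab)_{ij}=E_{ij}-(\Xb(\Ab-\Ab^*))_{ij}.
\]
For the \emph{upper} bound I would simply drop the indicators, getting $\vecc(\Ub)^\T\Hb_\tau(\Ab)\vecc(\Ub)\le n^{-1}\|\Xb\Ub\|_\rF^2=\mathrm{tr}(\Ub^\T\mathbf{S}\Ub)$, and then invoke Condition~\ref{condition2} together with the cone inequalities $\|\Ub\|_\rF\le\|\Ub\|_{1,2}\le\|\Ub\|_{1,1}\le(1+\xi)\sqrt m\,\|\Ub\|_\rF$ (which follow from $\cS\subseteq J$, $|J|\le m$, and $\|\Ub_{\cS^c}\|_{1,1}\le\xi\|\Ub_\cS\|_{1,1}$). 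Hence the substantive content is the lower bound.

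For the lower bound, write $\vecc(\Ub)^\T\Hb_\tau(\Ab)\vecc(\Ub)=\mathrm{tr}(\Ub^\T\mathbf{S}\Ub)-R(\Ab,\Ub)$ with the nonnegative defect $R(\Ab,\Ub):=n^{-1}\sum_{i,j}\mathbf{1}\{|R_{ij}(\Ab)|>\tau\}(\Xb\Ub)_{ij}^2$. By Condition~\ref{condition2} and $\|\Ub\|_{1,2}\ge\|\Ub\|_\rF$ the main term is at least $\kappa_{\mathrm{lower}}\|\Ub\|_\rF^2$, so it suffices to show that, on an event of probability at least $1-(pq)^{-1}$,
\[
\sup_{(\Ab,\Ub)\in\cC(m,\xi,\eta)}\frac{R(\Ab,\Ub)}{\|\Ub\|_\rF^2}\le\frac{\kappa_{\mathrm{lower}}}{2}.
\]
The first move I would make is a deterministic reduction that removes $(\Ab,\Ub)$ from inside the indicator: on the cone $\|\Ab-\Ab^*\|_{1,1}\le\eta$, and since $\max_{i,j}|X_{ij}|=1$ we have $|R_{ij}(\Ab)-E_{ij}|=|(\Xb(\Ab-\Ab^*))_{ij}|\le\eta$; using $\tau\ge8\eta$ this gives $\mathbf{1}\{|R_{ij}(\Ab)|>\tau\}\le\mathbf{1}\{|E_{ij}|>\tau/2\}=:w_{ij}$ uniformly over the cone, so $R(\Ab,\Ub)\le n^{-1}\sum_{i,j}w_{ij}(\Xb\Ub)_{ij}^2$ with the right side a fixed random quadratic form. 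Then, bounding $|(\Xb\Ub)_{ij}|\le\|\Xb_{i\cdot}\|_\infty\|\Ub_{\cdot j}\|_1\le\|\Ub_{\cdot j}\|_1$ and using the cone bound once more,
\[
R(\Ab,\Ub)\le\frac1n\sum_{j=1}^q N_j\,\|\Ub_{\cdot j}\|_1^2\le\frac{\max_{1\le j\le q}N_j}{n}\,\|\Ub\|_{1,1}^2\le\frac{\max_{1\le j\le q}N_j}{n}\,(1+\xi)^2 m\,\|\Ub\|_\rF^2,\qquad N_j:=\sum_{i=1}^n w_{ij},
\]
so the whole problem reduces to a tail bound on $\max_{j}N_j$.

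The last step is probabilistic. Each $N_j$ is a sum of $n$ independent Bernoulli variables (the rows of $\Eb$ are independent) with mean at most $n\pi_\tau$, where Markov's inequality and Condition~\ref{condition:error} give $\pi_\tau:=\max_{i,j}\mathbb{P}(|E_{ij}|>\tau/2)\le 2^{1+\delta}v_\delta\,\tau^{-(1+\delta)}$. Bernstein's inequality plus a union bound over $j\le q$ then yield $\max_j N_j\lesssim n\pi_\tau+\log(pq)$ with probability at least $1-(pq)^{-1}$, i.e. $\max_j N_j/n\lesssim v_\delta\tau^{-(1+\delta)}+\log(pq)/n$. The assumption $\tau\gtrsim(m v_\delta)^{1/(1+\delta)}$ makes the first term $\lesssim 1/m$ and $n\gtrsim m^2\log(pq)$ makes the second term $\lesssim 1/m$; choosing the constants $C,C'$ large enough forces the previous display to be $\le(\kappa_{\mathrm{lower}}/2)\|\Ub\|_\rF^2$ over the entire cone, completing the argument.

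I expect the main obstacle to be the \emph{uniform}-over-the-cone control of the defect $R(\Ab,\Ub)$: the $\ell_{1,1}$-cone is non-compact, so a direct attack would need a covering or chaining argument. The deterministic reduction to the $(\Ab,\Ub)$-free weights $w_{ij}=\mathbf{1}\{|E_{ij}|>\tau/2\}$ is precisely what sidesteps this, and it is legitimate only because $\tau\gtrsim\eta$ and $\max_{i,j}|X_{ij}|=1$ pin every fitted residual within $\eta$ of the true noise. The remaining delicate point is quantitative rather than structural: the weights enter only through $\pi_\tau\asymp v_\delta\tau^{-(1+\delta)}$, so heavier tails (smaller $\delta$, slower decay) force a larger robustification parameter $\tau\gtrsim(m v_\delta)^{1/(1+\delta)}$---the very mechanism behind the heavy-tailedness-versus-bias tradeoff, and exactly what the lemma's lower bound on $\tau$ is there to supply.
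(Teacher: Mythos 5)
Your proposal follows essentially the same route as the paper's proof: the same decomposition of the Hessian quadratic form into the Gram-matrix term minus a defect from large residuals, the same deterministic use of $\tau\gtrsim\eta$ and $\max_{i,j}|X_{ij}|=1$ to replace the $\Ab$-dependent indicator by $\mathbf{1}\{|E_{ij}|>\tau/2\}$ (thereby handling uniformity over the cone), the same cone bound $\|\Ub\|_{1,1}^2\le(1+\xi)^2 m\|\Ub\|_\rF^2$, and the same Markov-plus-concentration control of the exceedance frequencies (the paper uses Hoeffding where you invoke Bernstein, and keeps the $\eta$-induced indicator as a separate $\tfrac{2\eta}{\tau}$ term rather than absorbing it, but these are cosmetic). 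The argument is correct.
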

\noindent Lemma~\ref{lemma:localized restricted} shows that Condition~\ref{condition:lre} holds with high probability, as long as Condition~\ref{condition2} on the empirical Gram matrix $\mathbf{S}$ holds.
Note that the constants $\kappa_{\mathrm{lower}}$ and $\kappa_{\mathrm{upper}}$ also appear in Condition~\ref{condition2}.

We now present our main results on the estimation error of $\hat{\Ab}$ under the Frobenius norm and nuclear norm in the following theorem. For simplicity, we will present our main results conditioned on the event that  Conditions~\ref{condition:error}--\ref{condition:lre} hold.
%%%%%%%%%%%%%%%%%%%%%%%%%%%%%%
%%%%%%%%%%%%%%%%%%%%%%%%%%%%%%
% Theorem 
%%%%%%%%%%%%%%%%%%%%%%%%%%%%%%
%%%%%%%%%%%%%%%%%%%%%%%%%%%%%%
\begin{theorem}
\label{thm:up}
Let $\hat{\Ab}$ be a solution to \eqref{eq:opt:conv} with truncation and tuning parameters
\[
\tau \gtrsim \left(\frac{n v_{\delta}}{\log (pq)}\right)^{1/\min \{(1+\delta),2\}}, \qquad \lambda \gtrsim   v_{\delta}^{1/\min(1+\delta,2)}    \left( \frac{
\log (pq)}{n}\right)^{\min\{{\delta}/(1+\delta),1/2\}}
\]
and $\gamma>2.5$. Suppose that Conditions~\ref{condition:error}--\ref{condition:lre}   hold with $\xi = (2\gamma+5)/(2\gamma-5)$,  $\kappa_{\mathrm{lower}}>0$ and 
$
\eta\gtrsim\kappa_{\mathrm{lower}}^{-1}  \lambda s.
$ Assume that $n > C s^2 \log (pq)$ for some sufficiently large universal constant $C>0$. Then, with probability at least $1-(pq)^{-1}$, we have
\$
\big\|\widehat \Ab-\Ab^*\big\|_\rF&\lesssim  \kappa^{-1}_{\mathrm{lower}} v_{\delta}^{1/\min\{1+\delta,2\}}  \sqrt{rs_u s_v}~ \bigg\{\frac{\log (p q)}{n}\bigg\}^{\min\{\delta/(1+\delta), 1/2\}},\\
\big\|\widehat \Ab-\Ab^*\big\|_*&\lesssim   \kappa^{-1}_{\mathrm{lower}}v_{\delta}^{1/\min\{1+\delta,2\}}  rs_u s_v~ \bigg\{\frac{\log (p q)}{n}\bigg\}^{\min\{\delta/(1+\delta), 1/2\}}.
\$
\end{theorem}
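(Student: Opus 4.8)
\noindent\emph{Proof plan.} The plan is to run the standard penalized $M$-estimation argument, taking care of two features specific to this setting: the penalty is a sum of two norms, $\lambda(\|\cdot\|_*+\gamma\|\cdot\|_{1,1})$, and the Huber loss is only \emph{locally} strongly convex, so restricted strong convexity can be invoked (through Lemma~\ref{lemma:localized restricted}) only on a controlled $\ell_{1,1}$-neighbourhood of $\Ab^*$. Write $\psi_\tau=\ell_\tau'$ for the clipped score, $\widehat\Delta=\widehat\Ab-\Ab^*$, and note that since $\Yb-\Xb\Ab^*=\Eb$ the empirical-loss gradient at $\Ab^*$ is $-n^{-1}\Xb^\T\psi_\tau(\Eb)$. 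Optimality of $\widehat\Ab$ in \eqref{eq:opt:conv} gives the basic inequality
\[
D(\widehat\Ab,\Ab^*)\le\big\langle \tfrac1n\Xb^\T\psi_\tau(\Eb),\,\widehat\Delta\big\rangle+\lambda\big(\|\Ab^*\|_*-\|\widehat\Ab\|_*\big)+\lambda\gamma\big(\|\Ab^*\|_{1,1}-\|\widehat\Ab\|_{1,1}\big),
\]
where $D(\widehat\Ab,\Ab^*):=\tfrac1n\ell_\tau(\Eb-\Xb\widehat\Delta)-\tfrac1n\ell_\tau(\Eb)+\big\langle \tfrac1n\Xb^\T\psi_\tau(\Eb),\widehat\Delta\big\rangle\ge0$ is the Bregman divergence of the Huber loss, which by an exact second-order Taylor expansion equals $\tfrac12\vecc(\widehat\Delta)^\T\Hb_\tau(\widetilde\Ab)\vecc(\widehat\Delta)$ for some $\widetilde\Ab$ on the segment joining $\Ab^*$ and $\widehat\Ab$.

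\emph{Step 1: bounding the score.} Split $\psi_\tau(\Eb)=\{\psi_\tau(\Eb)-\EE\psi_\tau(\Eb)\}+\EE\psi_\tau(\Eb)$. Since $\EE E_{ij}=0$ and $|\psi_\tau(e)-e|\le|e|^{1+\delta}\tau^{-\delta}$, Condition~\ref{condition:error} gives the deterministic bias bound $\max_{i,j}|\EE\psi_\tau(E_{ij})|\le v_\delta\tau^{-\delta}$. For the centred part, $|\psi_\tau|\le\tau$ and $\EE\psi_\tau^2(E_{ij})\lesssim v_\delta\,\tau^{(1-\delta)_+}$ (the exponent being $0$ when $\delta\ge1$, where the $(1+\delta)$th moment bound already controls the second moment), so Bernstein's inequality together with $\max_{i,j}|X_{ij}|=1$ and a union bound over the $pq$ entries yield
\[
\Big\|\tfrac1n\Xb^\T\psi_\tau(\Eb)\Big\|_{\max}\lesssim v_\delta\tau^{-\delta}+\sqrt{\frac{v_\delta\,\tau^{(1-\delta)_+}\log(pq)}{n}}+\frac{\tau\log(pq)}{n}
\]
with probability at least $1-(pq)^{-1}$, where $\|\cdot\|_{\max}$ is the entrywise maximum. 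With $\tau$ chosen as in the theorem all three terms are of the same order and the right-hand side is $\lesssim v_\delta^{1/\min\{1+\delta,2\}}\{\log(pq)/n\}^{\min\{\delta/(1+\delta),1/2\}}$; this balancing of bias against fluctuation is precisely what produces the exponent $\min\{\delta/(1+\delta),1/2\}$, and calibrating $\lambda$ as in the theorem makes $\|\tfrac1n\Xb^\T\psi_\tau(\Eb)\|_{\max}$ a fixed multiple of $\lambda$. By Hölder's inequality, $\big\langle\tfrac1n\Xb^\T\psi_\tau(\Eb),\widehat\Delta\big\rangle\le\|\tfrac1n\Xb^\T\psi_\tau(\Eb)\|_{\max}\,\|\widehat\Delta\|_{1,1}$; notably, no spectral-norm control of the score (which would carry a $\sqrt{p+q}$ factor) is needed, thanks to Step~2.

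\emph{Step 2: localized cone membership.} Decompose the penalty gaps by the triangle inequality: $\|\Ab^*\|_{1,1}-\|\widehat\Ab\|_{1,1}\le\|\widehat\Delta_\cS\|_{1,1}-\|\widehat\Delta_{\cS^c}\|_{1,1}$, and, crucially, $\|\Ab^*\|_*-\|\widehat\Ab\|_*\le\|\widehat\Delta\|_*\le\|\widehat\Delta\|_{1,1}$, the last inequality holding because the nuclear norm is dual to the spectral norm and the spectral norm of any matrix bounds its largest entry in absolute value. Substituting into the basic inequality and using $D\ge0$ gives, after collecting terms, the cone condition $\|\widehat\Delta_{\cS^c}\|_{1,1}\le\xi\|\widehat\Delta_\cS\|_{1,1}$ with $\xi=(2\gamma+5)/(2\gamma-5)$ — this is where the crude bound $\|\cdot\|_*\le\|\cdot\|_{1,1}$ (contributing the ``$+1$'') together with the score calibration fixes the constants and forces $\gamma>5/2$. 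To place $\widehat\Delta$ in the \emph{localized} cone I must also verify $\|\widehat\Delta\|_{1,1}\le\eta$; this is the one genuinely delicate point, because restricted strong convexity is needed to prove the bound $\|\widehat\Delta\|_{1,1}\le(1+\xi)\|\widehat\Delta_\cS\|_{1,1}\lesssim\kappa_{\mathrm{lower}}^{-1}\lambda s\le\eta$ that in turn puts $\widehat\Delta$ inside the $\eta$-ball. I resolve the circularity by a convexity/continuity (``peeling'') argument along $t\mapsto\Ab^*+t\widehat\Delta$: if $\|\widehat\Delta\|_{1,1}>\eta$, the first crossing of the $\ell_{1,1}$-sphere of radius $\eta$ satisfies the same basic inequality and lies in the cone, so the error bound applies there and contradicts $\|\cdot\|_{1,1}=\eta$.

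\emph{Step 3: restricted strong convexity and conclusion.} With $J=\cS$ and $m=s$, Lemma~\ref{lemma:localized restricted} applies on the cone (its hypotheses $\tau\ge8\eta$ and $n\gtrsim s^2\log(pq)$ being exactly those assumed), giving $\vecc(\widehat\Delta)^\T\Hb_\tau(\widetilde\Ab)\vecc(\widehat\Delta)\ge\tfrac12\kappa_{\mathrm{lower}}\|\widehat\Delta\|_\rF^2$, hence $D(\widehat\Ab,\Ab^*)\ge\tfrac14\kappa_{\mathrm{lower}}\|\widehat\Delta\|_\rF^2$. On the other hand, the basic inequality with Steps~1--2 gives $D(\widehat\Ab,\Ab^*)\lesssim\lambda\|\widehat\Delta_\cS\|_{1,1}$, and since $\widehat\Delta_\cS$ has at most $s\le rs_us_v$ nonzero entries, $\|\widehat\Delta_\cS\|_{1,1}\le\sqrt{rs_us_v}\,\|\widehat\Delta_\cS\|_\rF\le\sqrt{rs_us_v}\,\|\widehat\Delta\|_\rF$. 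Combining yields the quadratic inequality $\kappa_{\mathrm{lower}}\|\widehat\Delta\|_\rF^2\lesssim\lambda\sqrt{rs_us_v}\,\|\widehat\Delta\|_\rF$, i.e. $\|\widehat\Delta\|_\rF\lesssim\kappa_{\mathrm{lower}}^{-1}\lambda\sqrt{rs_us_v}$; inserting the rate of $\lambda$ gives the Frobenius bound. For the nuclear bound, the cone condition and $\|\cdot\|_*\le\|\cdot\|_{1,1}$ give $\|\widehat\Delta\|_*\le\|\widehat\Delta\|_{1,1}\le(1+\xi)\|\widehat\Delta_\cS\|_{1,1}\le(1+\xi)\sqrt{rs_us_v}\,\|\widehat\Delta\|_\rF\lesssim\kappa_{\mathrm{lower}}^{-1}\lambda\,rs_us_v$. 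I expect the main obstacle to be Step~1 meshed with the localization of Step~2: extracting a sufficiently sharp entrywise concentration bound for the clipped score $\psi_\tau(\Eb)$ — with the correct $v_\delta$- and $\tau$-dependence — from only a $(1+\delta)$th moment, and making the diverging-$\tau$ and peeling argument interlock cleanly with the sample-size and $\tau$ hypotheses required by Lemma~\ref{lemma:localized restricted}.
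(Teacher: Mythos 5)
Your proof is correct and its overall architecture — score control in $\|\cdot\|_{\infty,\infty}$, an $\ell_{1,1}$-cone condition, localization by scaling back to the $\eta$-sphere, and localized restricted strong convexity — is the same as the paper's, but three of your sub-arguments genuinely differ. First, you bound the gradient at $\Ab^*$ by splitting $\psi_\tau(\Eb)$ into a deterministic truncation bias ($|\EE\psi_\tau(E_{ij})|\le v_\delta\tau^{-\delta}$) plus a bounded centered part handled by Bernstein's inequality; the paper's Lemma~\ref{lemma:0} instead runs a Catoni-style moment-generating-function argument through the logarithmic envelope of $\psi$. Both yield the same rate after balancing $\tau$, and yours is arguably the more transparent route (though your claim that $\EE\psi_\tau^2\lesssim v_\delta$ for $\delta\ge1$ is a slight abuse — Jensen gives $v_\delta^{2/(1+\delta)}$ — a constant-level issue the paper also glosses over). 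Second, you work from the primal basic inequality with the one-sided Bregman divergence, whereas the paper starts from the KKT stationarity condition and the symmetric Bregman divergence; these are interchangeable here, and your localization step (the first-crossing argument, using that $t\mapsto D(\Ab^*+t\widehat\Delta,\Ab^*)/t$ is nondecreasing by convexity) is exactly the role played by Lemma~A.1 of \citet{sun2016adaptive} in the paper. Third, and most notably, you dispense with the nuclear cone property (Lemma~\ref{lemma:nucone}) and the tangent-space decomposition entirely: the elementary chain $\|\widehat\Delta\|_*\le\|\widehat\Delta\|_{1,1}\le(1+\xi)\sqrt{s}\,\|\widehat\Delta\|_\rF\lesssim\kappa_{\mathrm{lower}}^{-1}\lambda s$ already matches the paper's $\lambda\sqrt{s}(\sqrt{r}\vee\sqrt{s})$ up to constants (since $s\ge r$ for a rank-$r$ matrix) and suffices for the stated $rs_us_v$ rate, so the paper's Lemma~\ref{lemma:nucone} buys nothing for this particular theorem; it would only matter in a regime where one could exploit $r\ll s$. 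Your use of $\|\cdot\|_*\le\|\cdot\|_{1,1}$ in deriving the cone also gives a slightly tighter aperture $\xi=(2\gamma+3)/(2\gamma-3)$ than the paper's $(2\gamma+5)/(2\gamma-5)$ (the paper bounds the nuclear subgradient by $\|\tilde\Nb\|_2\le2$), which is harmless since the smaller cone is contained in the one assumed in Condition~\ref{condition2}.
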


Theorem~\ref{thm:up} establishes the non-asymptotic  convergence rates  of our proposed estimator under both Frobenius and nuclear norms in the high-dimensional setting. To the best of our knowledge, we are the first to establish such results on the estimation error for robust sparse reduced rank regression.  
By contrast, most of the existing work on reduced rank regression focuses on  rank selection consistency and prediction consistency \citep{bunea2011optimal,bunea2012joint}.  Moreover, the prediction consistency results in \citet{she2017robust} are established under the assumption that the rank of the design matrix $\Xb$ is smaller than the number of observations $n$.
When the random noise has second or higher moments, i.e., $\delta\ge 1$, our proposed estimator achieves a parametric rate of convergence as if sub-Gaussian random noise were assumed.
It achieves a slower rate of convergence only when the random noise is extremely heavy-tailed, i.e., $0 < \delta <1$.

Intuitively, one might expect the optimal rate of convergence under the  Frobenius norm to have the form 
\$
\big\|\widehat \Ab-\Ab^*\big\|_\rF&\lesssim \sqrt{r(s_u\!+\!s_v)}~ \bigg\{\frac{\log (p q)}{n}\bigg\}^{\min\{\delta/(1+\delta), 1/2\}},
\$ 
since there are a total of roughly $r(s_u+s_v)$ nonzero parameters to be estimated in $\Ab^*$ as defined in \eqref{Eq:star}. Using the convex relaxation \eqref{eq:opt:conv}, we gain  computational tractability while losing a scaling factor of $\sqrt{s_us_v/(s_u\!+\!s_v)}$. 

By defining the {\it effective dimension} as $d_{\reff}=rs_u s_v$ and the {\it effective sample size} as $n_{\reff}=\big\{n/\log \big(p q)\big\}^{\min\{2\delta/(1+\delta), 1\}}$, the upper bounds in Theorem \ref{thm:up} can be rewritten as 
\$
\big\|\widehat \Ab-\Ab^*\big\|_\rF\lesssim  \sqrt{\frac{d_\reff}{n_\reff}},\quad 
\big\|\widehat \Ab-\Ab^*\big\|_*\lesssim \frac{ d_\reff}{ \sqrt{n_\reff}}. 
\$
The {\it effective dimension} depends only on the sparsity and rank, while the {\it effective sample size} depends only on the sample size divided by the log of the number of free parameters, as if there were no structural constraints.
Our results exhibit an interesting phenomenon: the rate of convergence is affected by the heavy-tailedness only through the effective sample size; the effective dimension stays the same regardless of $\delta$.
This parallels results for Huber linear regression in \citet{sun2016adaptive}.

%%%%%%%%%%%%%%%%%%%%%%%%%%%%%%%%%% 
%%%%%%%%%%%%%%%%%%%%%%%%%%%%%%%%%%
% Numerical Studies
%%%%%%%%%%%%%%%%%%%%%%%%%%%%%%%%%% 
%%%%%%%%%%%%%%%%%%%%%%%%%%%%%%%%%%
\section{Numerical Studies}
\label{sec:sim}
We perform extensive numerical studies to evaluate the performance of our proposal for robust sparse reduced rank regression.
Five approaches are compared in our numerical studies: our proposal with Huber loss, \texttt{hubersrrr}; our proposal with squared error loss (with $\tau\rightarrow \infty$), \texttt{srrr}; robust reduced rank regression with an additional mean parameter that models the outliers \citep{she2017robust}, \texttt{r4}; penalized reduced rank regression via an adaptive nuclear norm \citep{chen2013reduced}, \texttt{rrr}; and the penalized reduced rank regression via a ridge penalty  \citep{mukherjee2011reduced}, \texttt{rrridge}. 
The proposals \texttt{rrridge}, \texttt{rrr}, and \texttt{r4} do not assume sparsity on the regression coefficients.  Moreover, \texttt{r4} can only be implemented in the low-dimensional setting in which $n \ge p$, or under the assumption that the design matrix $\Xb$ is low rank.    
Among the five proposals, only \texttt{hubersrrr} and \texttt{r4} are robust against outliers. 

For all of our numerical studies, we generate each row of $\Xb$ from a multivariate normal distribution with mean zero and covariance matrix $\bSigma$, where $\bSigma_{ij} = 0.5^{|i-j|}$ for $1\le i,j\le p$. Then, all elements of $\Xb$ are divided by the maximum absolute value of $\Xb$ such that $\max_{i,j} |X_{ij}| = 1$.  
The response matrix $\Yb$ is then generated according to $\Yb = \Xb \Ab^* +\Eb$.  
We consider two different types of outliers: (i) heavy-tailed random noise $\Eb$, and (ii) contamination of some percentage of the elements of $\Yb$.
We simulate data with sparse and non-sparse low rank matrix $\Ab^*$.
The details for the different scenarios will be specified in Section~\ref{sim:matrixlow}.

Our proposal \texttt{hubersrrr} involves three tuning parameters.  We select the tuning parameters using five-fold cross-validation: we vary $\lambda$ across a fine grid of values, consider four values of $\gamma =\{2.5,3,3.5,4\}$ as suggested by Theorem~\ref{thm:up}, and considered a range of the robustification parameter $\tau = c \{ n/\log(pq)\}^{1/2}$, where $c=\{0.4,0.45,\ldots,1.45,1.5\}$.  The tuning parameters for \texttt{srrr} are selected in a similar fashion with $\tau \rightarrow \infty$. 
For scenarios with non-sparse regression coefficients, we simply set $\gamma=0$ for \texttt{hubersrrr} and \texttt{srrr} for fair comparison against other approaches that do not assume sparsity. 
For \texttt{r3}, we select the tuning parameter using five different information criteria implemented in the \texttt{R} package \texttt{rrpack} \citep{chen2013reduced}, and report the best result.  
 For \texttt{rrridge}, we specify the correct rank for $\Ab^*$ and simply consider a fine grid of tuning parameters for the ridge penalty and report the best result.  
The two tuning parameters for \texttt{r4} control the sparsity of the mean shift parameter for modeling outliers, and the rank of $\Ab^*$.  We implement \texttt{r4} by specifying the correct rank of $\Ab^*$, and choose the sparsity tuning parameter according to five-fold cross-validation.
In other words, we give a major advantage to \texttt{rrridge} and \texttt{r4}, in that we provide the rank of $\Ab^*$ as an input.

To evaluate the performance across different methods, we calculate the difference between the estimated regression coefficients $\hat{\Ab}$ and the true coefficients $\Ab^*$ under the Frobenius norm.  
In addition, for scenarios with in which $\Ab^*$ is sparse, we calculate the true and false positive rates (TPR and FPR), defined as the proportion of correctly estimated nonzeros in the true parameter, and the proportion of zeros that are incorrectly estimated to be nonzero in the true parameter, respectively.

Since some existing approaches are not applicable in the high-dimensional setting, we perform numerical studies under the low-dimensional setting in which $n\ge p$ in Section~\ref{sim:matrixlow}.  We then illustrate the performance of our proposed methods, \texttt{hubersrrr} and \texttt{srrr}, in the high-dimensional setting in Section~\ref{sim:matrix}.

%%%%%%%%%%%%%%%%%%%%%%%%%%%%%%%%%% 
%%%%%%%%%%%%%%%%%%%%%%%%%%%%%%%%%%
% Low Dimensions
%%%%%%%%%%%%%%%%%%%%%%%%%%%%%%%%%% 
%%%%%%%%%%%%%%%%%%%%%%%%%%%%%%%%%%
\subsection{Low-Dimensional Setting with $n\ge p$}
\label{sim:matrixlow}
In this section, we perform numerical studies with $n=200$, $p=50$, and $q=10$. We first consider two cases in which $\Ab^*$ has low rank but is not sparse:
\begin{enumerate}
\item Rank one matrix: $\Ab^* =  \mathbf{u}_1\mathbf{v}_{1}^\T$, where each element of $\mathbf{u}_1 \in \mathbb{R}^p$ and $\mathbf{v}_1\in \mathbb{R}^q$ is generated from a uniform distribution on the interval $[-1,0.5]\cup [0.5,1]$.

\item Rank two matrix: $\Ab^* =   \mathbf{u}_1\mathbf{v}_{1}^\T+  \mathbf{u}_2\mathbf{v}_{2}^\T$, where each element of  $\ub_1,\ub_2 \in \mathbb{R}^p$ and $\vb_1,\vb_2\in \mathbb{R}^q$  is generated from a uniform distribution on the interval $[-1,0.5]\cup [0.5,1]$.
\end{enumerate}
We then generate random noise $\Eb \in \mathbb{R}^{n\times q}$ from three different distributions: (i) the normal distribution $N(0,4)$, (ii) the $t$-distribution with degrees of freedom 1.5, and (iii) the log-normal distribution $\log N(0, 1.2^2)$.  
Moreover, we consider a contamination scenario in which we generate each element of $\Eb$ from the $N(0,4)$ distribution, and then randomly contaminate $5\%$ and $10\%$ of the elements in $\Yb$ by replacing them with random values generated from a uniform distribution on the interval $[10,20]$.
The estimation error for each method under the Frobenius norm, averaged over 100 data sets, is reported in Table~\ref{tablelowd1}.

From Table~\ref{tablelowd1}, we see that \texttt{rrr} and \texttt{rrridge} outperform all  other methods when $\Ab^*$ is rank one under Gaussian noise. This is not surprising, since \texttt{rrr} and \texttt{rrridge} are tailored for reduced rank regression without outliers.     
We see that \texttt{hubersrrr} has similar performance to \texttt{srrr}, suggesting that there is no loss of efficiency for \texttt{hubersrrr} even when there are no outliers.  
When the random noise is generated from the $t$-distribution, \texttt{r4} has the best performance, followed by \texttt{hubersrrr}.  The estimation errors for methods that do not model the outliers are substantially higher.   For log-normal random noise, \texttt{hubersrrr} outperforms \texttt{r4}.
Under the data contamination model, \texttt{r4} and \texttt{hubersrrr} perform similarly, and both outperform all of the other methods.
These results corroborate the observation in \citet{she2017robust} that the estimation of low rank matrices is extremely sensitive to outliers.  
As we increase the contamination percentage of the observed outcomes, we see that the performance of the non-robust methods deteriorates. 
 Similar results are observed for the case when $\Ab^*$ has rank two.

%%%%%%%%%%%%%%%%%%%%%%%%%%%%
% Table 1
%%%%%%%%%%%%%%%%%%%%%%%%%%%%
\begin{table}[!t]
\footnotesize
\begin{center}
\caption{The mean (and standard error) of the difference between the estimated regression coefficients and the true regression coefficients under the Frobenius norm, averaged over 100 data sets, in the setting where $\Ab^*$ is not sparse, with $n=200$, $p=50$, and $q=10$. Three distributions of random noise are considered: normal, $t$, and log-normal.  We also considered contaminating 5\% or 10\% of the elements of $\Yb$.    }
\begin{tabular}{c| l|    c cc| cccc}
  \hline
Rank of $\Ab^*$ &  & \multicolumn{3}{c}{Random Noise}& \multicolumn{3}{c}{Data Contamination} \\\hline
&Methods&Normal&$t$ &Log-normal&0\%&5\%&10\% \\ 
\hline
&\texttt{rrr}   & 5.80 (0.07)& 17.71 (2.84)& 10.71 (0.19)&  5.80 (0.07)  & 10.35 (0.13) & 12.33 (0.11)  \\ 
 &\texttt{rrridge} & 5.42 (0.06) & 13.79 (0.52) & 9.22 (0.17)&  5.42 (0.06) & 9.07 (0.10)& 10.93 (0.11)\\
1&\texttt{srrr} & 7.19 (0.08) & 26.75 (5.32)&  10.41 (0.13)&7.19 (0.08)&  10.49 (0.09)& 11.76 (0.10)\\
&\texttt{r4} & 7.32 (0.10) & 4.65 (0.07)&  8.88 (0.16)& 7.32 (0.10) & 7.93 (0.11)& 8.54 (0.12)\\
&\texttt{hubersrrr} & 7.21 (0.08) & 6.96 (0.13)&  6.70 (0.08)&7.21 (0.08)& 7.92 (0.09)& 8.40 (0.09)\\
\hline
&\texttt{rrr}   & 6.09 (0.09)& 31.20 (5.64)& 12.08 (0.32)& 6.09 (0.09)  & 12.29 (0.19) & 16.81 (0.25) \\ 
&\texttt{rrridge} & 9.16 (0.09) & 22.75 (1.16) & 15.16 (0.20)&  9.16 (0.09) & 15.24 (0.12) & 18.22 (0.13)\\
2 &\texttt{srrr} & 8.69 (0.11) & 41.76 (11.42)&  14.20 (0.24)& 8.69 (0.11)& 14.94 (0.16)  & 18.26 (0.18)\\
&\texttt{r4} & 11.63 (0.13) & 8.51 (0.41)&  12.56 (0.17)&11.63 (0.13) & 12.62 (0.14) & 13.69 (0.15) \\
&\texttt{hubersrrr} & 8.70 (0.11) & 8.25 (0.24)&  7.82 (0.11)& 8.70 (0.11) & 9.81 (0.13) & 10.99 (0.15) \\
\hline
\end{tabular}
\label{tablelowd1}
\end{center}
\end{table}

%%%%%%%%%%%%%%%%%%%%%%%%%%%%%%
%%%%%%%%%%%%%%%%%%%%%%%%%%%%%%
% Sparse case
%%%%%%%%%%%%%%%%%%%%%%%%%%%%%%
%%%%%%%%%%%%%%%%%%%%%%%%%%%%%%
Next, we consider two cases in which $\Ab^*$ is both sparse and low rank:  
\begin{enumerate}
\item Sparse rank one matrix: $\Ab^* = \mathbf{u}_1\mathbf{v}_{1}^\T$ with $\mathbf{u}_1 = (\mathbf{1}_4^\T,\mathbf{0}_{p-4}^\T)^\T$ and $\mathbf{v}_1 = (\mathbf{1}_{4}^\T,\mathbf{0}_{q-4}^\T)^\T$; 
\item Sparse rank two matrix: $\Ab^* = \mathbf{u}_1\mathbf{v}_{1}^\T+\mathbf{u}_2\mathbf{v}_{2}^\T$ with $\mathbf{u}_1 = (\mathbf{1}_4^\T,\mathbf{0}_{p-4}^\T)^\T$, $\mathbf{v}_1 = (\mathbf{1}_{4}^\T,\mathbf{0}_{q-4}^\T)^\T$,  $\mathbf{u}_2 = (\mathbf{0}_2^\T,\mathbf{1}_4^\T,\mathbf{0}_{p-6}^\T)^\T$, and $\mathbf{v}_2 = (\mathbf{0}_2^\T,\mathbf{1}_4^\T,\mathbf{0}_{q-6}^\T)^\T$.
\end{enumerate}
The heavy-tailed random noise and data contamination scenarios are as described earlier.
The results, averaged over 100 data sets, are reported in Table~\ref{tablelowd2}.

When $\Ab^*$ is sparse, \texttt{hubersrrr} and \texttt{srrr} outperform all of the methods that do not assume sparsity.  In particular, we see that \texttt{r4} has the worst performance when the random noise is normal or log-normal, or when the data are contaminated.    The method \texttt{rrr} has an MSE of 5.00 when the data are contaminated, due to the fact that the information criteria always select models with the regression coefficients estimated to be zero.   
In short, our proposal  \texttt{hubersrrr} has the best performance across all scenarios and is robust against different types of outliers.

%%%%%%%%%%%%%%%%%%%%%%%%%%%%
% Table 2
%%%%%%%%%%%%%%%%%%%%%%%%%%%%
\begin{table}[!t]
\footnotesize
\begin{center}
\caption{Results for the case where $\Ab^*$ is sparse, with $n=200$, $p=50$, and $q=10$. Other details are as in Table~\ref{tablelowd1}. }
\begin{tabular}{c| l|    c cc| cccc}
  \hline
rank of $\Ab^*$&  & \multicolumn{3}{c}{Random Noise}& \multicolumn{3}{c}{Data Contamination} \\\hline
&Methods &Normal&$t$-dist &Log-normal&0\%&5\%&10\% \\ 
\hline
&\texttt{rrr}   & 4.65 (0.04)& 6.95 (0.88)& 4.98 (0.01)&  4.65 (0.04)  & 5.00 (0.01) & 5.00 (0.01)  \\ 
 &\texttt{rrridge} & 2.73 (0.03) & 7.78 (0.55) & 4.17 (0.08)&  2.73 (0.03) & 4.02 (0.04)& 4.64 (0.05)\\
1&\texttt{srrr} & 2.57 (0.04) & 5.02 (0.08)&  4.48 (0.06)& 2.54 (0.04)&  4.54 (0.04)& 4.94 (0.04)\\
&\texttt{r4} & 7.29 (0.10) & 4.79 (0.09)&  10.44 (0.16)&  7.29 (0.10) & 7.98 (0.12)& 8.94 (0.12)\\
&\texttt{hubersrrr} & 2.57 (0.04) & 2.82 (0.13)&  2.37 (0.05)& 2.54 (0.04)& 2.93 (0.05)& 3.28 (0.06)\\
\hline
&\texttt{rrr}   & 5.25 (0.04)& 10.05 (0.82)& 8.18 (0.03)& 5.25 (0.04)  & 8.22 (0.01) & 8.24 (0.01) \\ 
 &\texttt{rrridge} & 4.36 (0.03) & 9.35 (0.54) & 6.00 (0.05)& 4.36 (0.03) & 6.11 (0.04) & 6.82 (0.04)\\
2&\texttt{srrr} & 3.26 (0.04) & 7.81 (0.11)&  5.78 (0.11)&  3.26 (0.04)& 5.83 (0.06)  & 6.96 (0.07)\\
&\texttt{r4} & 11.55 (0.12) & 7.75 (0.12)&  12.91 (0.15) & 11.55 (0.12) & 12.50 (0.13) & 13.65 (0.15) \\
&\texttt{hubersrrr} & 3.27 (0.04) & 3.44 (0.11)&  3.07 (0.05)& 3.27 (0.04) & 3.70 (0.04) & 4.09 (0.06) \\
\hline
\end{tabular}
\label{tablelowd2}
\end{center}
\end{table}

%%%%%%%%%%%%%%%%%%%%%%%%%%%%%%%%%% 
%%%%%%%%%%%%%%%%%%%%%%%%%%%%%%%%%%
% High Dimensions
%%%%%%%%%%%%%%%%%%%%%%%%%%%%%%%%%% 
%%%%%%%%%%%%%%%%%%%%%%%%%%%%%%%%%%
\subsection{High-Dimensional Setting with $p> n$}
\label{sim:matrix}
In this section, we assess the performance of our proposed method in the high-dimensional setting, when the matrix $\Ab^*$ is sparse.  
To this end, we perform numerical studies with $q=10$, $p=200$, and $n=150$. 
Note that \texttt{r4} is not applicable when $p>n$.   
Moreover,  \texttt{rrr} and \texttt{rrridge} do not assume sparsity and therefore their results are omitted.
We consider low rank and sparse matrices $\Ab^*$ described in Section~\ref{sim:matrixlow}.     
Similarly, two types of outliers are considered: heavy-tailed random noise, and data contamination.
The TPR, FPR, and estimation error under Frobenius norm for both types of scenarios, averaged over 100 data sets, are summarized in Tables~\ref{table3}--\ref{table4}, respectively. 

We see that for Gaussian random noise,  \texttt{hubersrrr} is comparable to \texttt{srrr}, indicating that there is little loss of efficiency when there are no outliers.  
However, in scenarios in which the random noise is heavy-tailed, \texttt{hubersrrr} has high TPR, low FPR, and low Frobenius norm compared to \texttt{srrr}.
In fact, we see that when the random noise is heavy-tailed, the TPR and FPR of \texttt{srrr} are approximately zero.
We see similar performance for the case when the data are contaminated in Table~\ref{table4}.  
These results suggest that \texttt{hubersrrr} should be preferred in all scenarios since it allows accurate estimation of $\Ab^*$ when the random noise are heavy-tailed, or under  data contamination.  Moreover, there is little loss of efficiency compared to \texttt{srrr} when there are no outliers.

%%%%%%%%%%%%%%%%%%%%%%%%%%%%
% Table 3
%%%%%%%%%%%%%%%%%%%%%%%%%%%%
\begin{table}[!t]
\footnotesize
\begin{center}
\caption{Results for the case when $\Ab^*$ is sparse and low rank in the high-dimensional setting with $n=150$, $p=200$, and $q=10$.  Three distributions of random noise are considered: normal, $t$, and log-normal. We report the mean (and standard error) of the true and false positive rates, and the difference between $\hat{\Ab}$ and $\Ab^*$ under Frobenius norm, averaged over 100 data sets.  }

\begin{tabular}{c| l|    c cc| cccc}
  \hline
Rank of $\Ab^*$& Noise & \multicolumn{3}{c}{\texttt{srrr}}& \multicolumn{3}{c}{\texttt{hubersrrr}} \\\hline
&&TPR&FPR&Frobenius&TPR&FPR&Frobenius \\ 
\hline
&Normal   & 0.95 (0.01)& 0.12 (0.01)& 3.74 (0.05)& 0.95 (0.01)  & 0.13 (0.01) & 3.75 (0.05)  \\ 
1 &$t$-dist & 0.01 (0.01) & 0.01 (0.01) & 6.23 (1.23)& 0.96 (0.02) & 0.14 (0.01)& 4.28 (0.47)\\
&Log-normal & 0.08 (0.02) & 0.01 (0.01)&  5.00 (0.02)&0.98 (0.01)& 0.15 (0.01)& 3.53 (0.06)\\
\hline
&Normal  & 0.96 (0.01) & 0.15 (0.01)& 4.65 (0.05)&0.96 (0.01)& 0.16 (0.01)& 4.65 (0.05)\\
2&$t$-dist &0.06 (0.02)& 0.01 (0.01)& 9.38 (1.21)& 0.97 (0.01)& 0.17 (0.01)& 5.11 (0.48)\\
&Log-normal & 0.39 (0.03)& 0.03 (0.01)& 7.50 (0.09) &0.98 (0.01)&0.18 (0.01)& 4.41 (0.07)\\
\hline
\end{tabular}
\label{table3}
\end{center}
\end{table}

%%%%%%%%%%%%%%%%%%%%%%%%%%%%
% Table 1
%%%%%%%%%%%%%%%%%%%%%%%%%%%%
\begin{table}[!t]
\footnotesize
\begin{center}
\caption{ Results for the case when $\Ab^*$ is sparse and low rank, and $n=150$, $p=200$, and $q=10$, with 5\% and 10\% of the data being contaminated.  Other details are as in Table~\ref{table3}.  }
%%%%%%%%%%%%%%%%%%%%%%%%%%%%
% Table 2
% Data Contamination
%%%%%%%%%%%%%%%%%%%%%%%%%%%%
\begin{tabular}{c| c|    c cc| cccc}
  \hline
Rank of $\Ab^*$& Contamination \%& \multicolumn{3}{c}{\texttt{srrr}}& \multicolumn{3}{c}{\texttt{hubersrrr}} \\\hline
&&TPR&FPR&Frobenius&TPR&FPR&Frobenius \\ 
\hline
&0\%   & 0.95 (0.01)& 0.12 (0.01)& 3.74 (0.05)& 0.95 (0.01)  & 0.13 (0.01) & 3.75 (0.05)  \\ 
1 &5\% & 0.14 (0.02) & 0.02 (0.01) & 5.08 (0.03)& 0.82 (0.03) & 0.12 (0.01)& 4.24 (0.06)\\
&10\% & 0.04 (0.01) & 0.01 (0.01)&  5.13 (0.04)&0.74 (0.03)& 0.11 (0.01)& 4.52 (0.06)\\
\hline
&0\%  & 0.96 (0.01) & 0.15 (0.01)& 4.65 (0.05)&0.96 (0.01)& 0.16 (0.01)& 4.65 (0.05)\\
2&5\% &0.49 (0.02)& 0.06 (0.01)& 7.43 (0.07)& 0.94 (0.01)& 0.15 (0.01)& 5.22 (0.07)\\
&10\% & 0.21 (0.02)& 0.03 (0.01)& 8.13 (0.04) &0.90 (0.01)&0.15 (0.01)& 5.63 (0.08)\\
\hline
\end{tabular}
\label{table4}
\end{center}
\end{table}

 %%%%%%%%%%%%%%%%%%%%%%%%%%%%%%%%%% 
%%%%%%%%%%%%%%%%%%%%%%%%%%%%%%%%%%
% Real Data Applications
%%%%%%%%%%%%%%%%%%%%%%%%%%%%%%%%%% 
%%%%%%%%%%%%%%%%%%%%%%%%%%%%%%%%%%
\section{Data Application}
\label{realdata:rsrrr}
We apply the proposed robust sparse reduced rank regression to the \emph{Arabidopsis thaliana} data set, which consists of gene expression measurements for $n=118$ samples \citep{Rodriguez02,WilleEtAl2004,Maetal2007,tanetal2015,she2017robust}. 
It is known that isoprenoids play many important roles in biochemical functions such as respiration, photosynthesis, and regulation of growth in plants.  
Here, we explore the connection between two isoprenoid biosynthesis pathways and some downstream pathways. 

Similar to \citet{she2017robust}, we treat the $p=39$ genes from  two isoprenoid biosynthesis pathways as the predictors, and treat the $q=795$ genes from 56 downstream pathways as the response. 
Thus, $\Xb \in \RR^{118\times 39}$ and $\Yb \in \RR^{118\times 795}$, and we are interested in fitting the model $\Yb = \Xb \Ab+ \Eb$.  
We scale each element of $\Xb$ such that $\max_{i,j} |X_{ij}| = 1$, and standardize each column of $\Yb$ to have mean zero and standard deviation one.  
To assess whether there are outliers in $\Yb$, we perform Grubbs' test on each column of $\Yb$ \citep{grubbs}.  
Grubbs' test, also known as the maximum normalized residual test, is used to detect outliers from a normal distribution. 
After a Bonferroni correction, we find that 260 genes contain outliers.
In Figure~\ref{fig:hist}, we plot histograms for three genes that contain outliers.

\begin{figure}[!htp]
\centering
{\includegraphics[scale=0.52]{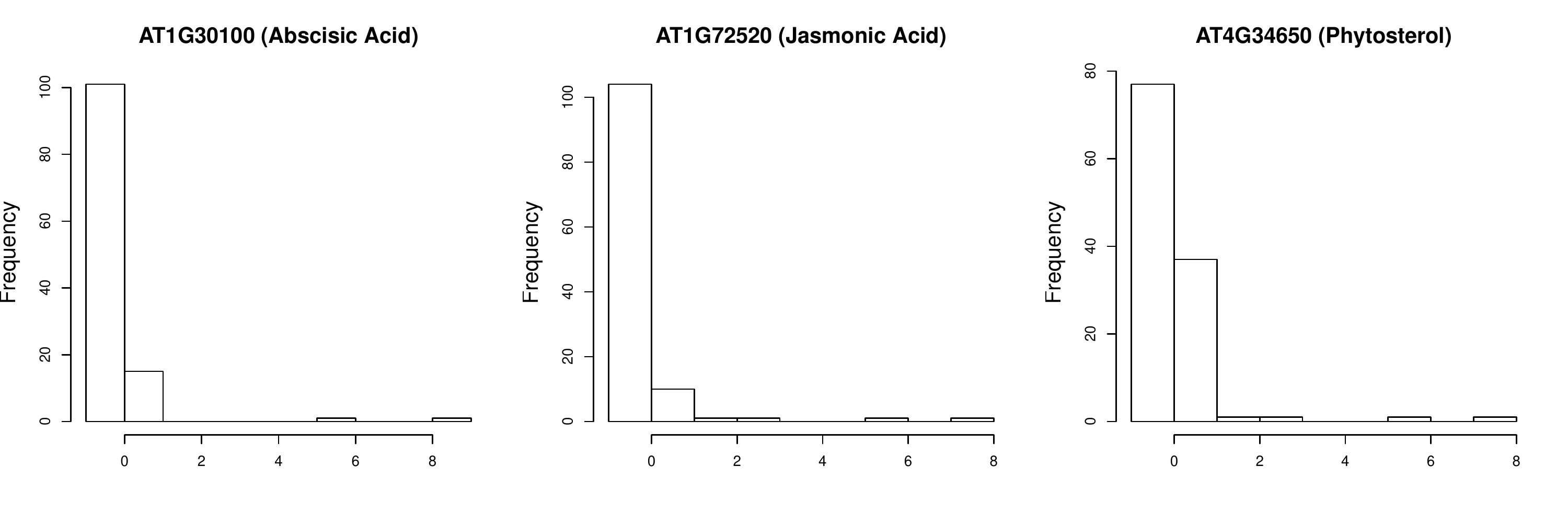}}
    \caption{Histograms for three genes from the abscisic acid, jasmonic acid, and phytosterol  pathways that are heavy-tailed. These genes are AT1G30100, AT1G72520, and AT4G34650, respectively.       }
          \label{fig:hist}
\end{figure}

In Section~\ref{sim:matrix}, we illustrated with numerical studies that if the response variables are heavy-tailed, sparse reduced rank regression with squared error loss will lead to incorrect estimates.
We now illustrate the difference between solving~\eqref{eq:opt:conv} with Huber loss and squared error loss. 
We set $\gamma=3$, and pick $\lambda$ such that there are 1000 non-zeros in the estimated coefficient matrix.  
For the robust method, we set the robustification parameter to equal $\tau=3$ for simplicity. In principle, this quantity can be chosen using cross-validation.

Let $\hat{\Ab}_{\texttt{{hubersrrr}}}$ and $\hat{\Ab}_{\texttt{{srrr}}}$ be the estimated regression coefficients for the robust and non-robust methods, respectively. 
To measure the difference between the two approaches in terms of regression coefficients and prediction, we compute the quantities  $\|\hat{\Ab}_{\texttt{hubersrrrst}}-\hat{\Ab}_{\texttt{srrr}}\|_{\rF}/\|\hat{\Ab}_{\texttt{hubersrrr}}\|_{\rF} \approx 37\%$ and $\|\Xb\hat{\Ab}_{\texttt{hubersrrr}}-\Xb\hat{\Ab}_{\texttt{srrr}}\|_{\rF}/\|\Xb\hat{\Ab}_{\texttt{hubersrrr}}\|_{\rF} \approx 35\%$.

Figure~\ref{fig:scat} displays scatterplots of the right singular vectors of $\Xb \hat{\Ab}_{\texttt{srrr}}$ against the right singular vectors of $\Xb\hat{\Ab}_{\texttt{hubersrrr}}$. 
We see that while the first singular vectors are similar between the two methods, the second and third singular vectors are very different.
These results suggest that the regression coefficients and model predictions can be quite different between robust and non-robust methods when there are outliers, and that care needs to be taken during model fitting.

\begin{figure}[!htp]
\centering
{\includegraphics[scale=0.52]{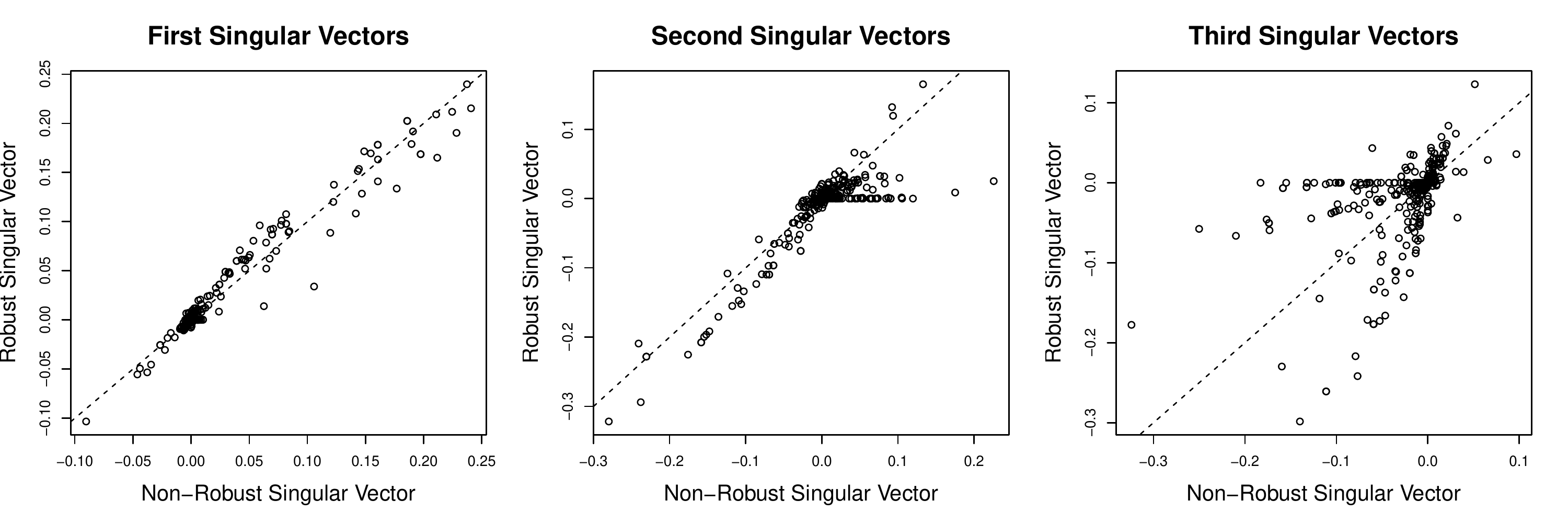}}
    \caption{Scatterplots of the leading right singular vectors of $\Xb \hat{\Ab}_{\texttt{hubersrrr}}$ and $\Xb \hat{\Ab}_{\texttt{srrr}}$.        }
          \label{fig:scat}
\end{figure}

%%%%%%%%%%%%%%%%%%%%%%%%%
% Figure 
%%%%%%%%%%%%%%%%%%%%%%%%%
\section{Discussion}
\label{sec:discuss}
We propose robust sparse reduced rank regression for analyzing large, complex, and possibly contaminated data. Our proposal is based on a convex relaxation, and is thus computationally tractable. 
We show that our proposal is statistically consistent under both Frobenius and nuclear norms in the high-dimensional setting in which $p>n$.      
By contrast, most of the existing literature in reduced rank regression focus on prediction and rank selection consistency.

In this paper, we focus on \emph{tail robustness}, i.e., the performance of an estimator in the presence of heavy-tailed noise.  
We show that the proposed robust estimator can achieve exponential-type deviation errors only under bounded low-order moments.
Tail robustness is different from the classical definition of robustness, which is  characterized by the breakdown point \citep{hampel1971general}, i.e., the proportion of outliers that a procedure can tolerate before it produces arbitrarily large estimates.
 However, the breakdown point does not shed light on the convergence properties of an estimator, such as consistency and efficiency.  
Intuitively, the breakdown point characterizes a form of the worst-case robustness, while  tail robustness corresponds to the average-case robustness. So a natural question arises: 
\begin{quote}
{\it 
What is the connection between the average-case robustness and the worst-case robustness? }
\end{quote}
We leave this for future work. 
 
 \bibliographystyle{ims}
\bibliography{reference}

%%%%%%%%%%%%%%%%%%%%%%%%%%%%%%%%%% 
%%%%%%%%%%%%%%%%%%%%%%%%%%%%%%%%%%
% Begin Appendix
%%%%%%%%%%%%%%%%%%%%%%%%%%%%%%%%%% 
%%%%%%%%%%%%%%%%%%%%%%%%%%%%%%%%%%
\newpage 
\appendix

\section{Derivation of Algorithm~\ref{Alg:general}}
\label{appendix:alg}
We derive the ADMM algorithm for solving (\ref{eq:opt:conv2}).
Recall that $\Bb = (\Bb_D,\Bb_Z,\Bb_W)^\T$, $\tilde{\Xb} = (\Xb,\Ib,\Ib)^\T$, and $\bOmega = (\Db,\Zb,\Wb)^\T$.
The scaled augmented Lagrangian for (\ref{eq:opt:conv2}) takes the form 
 \begin{equation}
 \label{Appendix:lagrangian}
\begin{split}
\cL_\rho(\Ab,\Db,\Wb,\Zb,\Bb) &= \frac{1}{n}  \ell_\tau(\Yb-\Db)    + \lambda \gamma \|\Zb \|_{1,1}+ \lambda  \|\Wb \|_*   +\frac{\rho}{2}\left\|\bOmega +\Bb - \tilde{\Xb} \Ab \right\|^2_\rF.\\
\end{split}
\end{equation}

\noindent The ADMM algorithm requires the following updates:\begin{enumerate}
\item $\Ab^{t+1} \leftarrow \underset{\Ab}{\text{argmin  }} \cL_\rho (\Ab,\Db^t,\Wb^t,\Zb^t,\Bb^t)$.
\item $\Db^{t+1} \leftarrow \underset{\Db}{\text{argmin  }} \cL_\rho (\Ab^{t+1},\Db,\Wb^t,\Zb^t,\Bb^t)$.
\item $\Wb^{t+1} \leftarrow \underset{\Wb}{\text{argmin  }} \cL_\rho (\Ab^{t+1},\Db^{t+1},\Wb,\Zb^t,\Bb^t)$.
\item $\Zb^{t+1} \leftarrow \underset{\Zb}{\text{argmin  }} \cL_\rho (\Ab^{t+1},\Db^{t+1},\Wb^{t+1},\Zb,\Bb^t)$.
\item $\Bb^{t+1} \leftarrow \Bb^{t} + \rho( {\Xb}(\Ab^{t+1}) -\bOmega^{t+1} )$.
\end{enumerate}

%%%%%%%%%%%%%%%%%%%%%%%%%%
% Update for A
%%%%%%%%%%%%%%%%%%%%%%%%%%
 \textbf{Update for $\Ab$:}
To obtain an update for $\Ab$, we solve the following optimization problem
\[
\underset{\Ab}{\mathrm{minimize}}~ \left\|  \bOmega + \Bb - \tilde{\Xb}\Ab     \right\|_{\rF}^2.
\]
Thus, we obtain $\hat{\Ab} = (\tilde{\Xb}^{\T} \tilde{\Xb})^{-1} \tilde{\Xb}^{\T}(\bOmega +\Bb)$.\\

%%%%%%%%%%%%%%%%%%%%%%%%%%
% Update for Z
%%%%%%%%%%%%%%%%%%%%%%%%%%
 \textbf{Update for $\Zb$:}
To obtain an update for $\Zb$, we need to solve the following optimization problem
\[
\underset{\Zb}{\mathrm{minimize}}~\frac{1}{2} \left\|  \Zb -  (\Ab- \Bb_Z)     \right\|_{\rF}^2 +  \frac{\lambda  \gamma}{\rho} \|\Zb\|_{1,1}.
\]
Thus, we have $\hat{\Zb} = S(\Ab-\Bb_Z,\lambda \gamma /\rho)$, where $S$ denote the soft-thresholding operator, applied element-wise to a matrix, i.e.,  $S(A_{ij},b) = \text{sign}(A_{ij}) \max( |A_{ij}|-b, 0)$.\\
 
%%%%%%%%%%%%%%%%%%%%%%%%%%
% Update for W
%%%%%%%%%%%%%%%%%%%%%%%%%%
 \textbf{Update for $\Wb$:}
To obtain an update for $\Wb$, it amounts to solving 
\[
\underset{\Wb}{\mathrm{minimize}}~\frac{1}{2} \left\|  \Wb -  (\Ab- \Bb_W)     \right\|_{\rF}^2 +  \frac{\lambda}{\rho} \|\Wb\|_{*}.
\]
 Let $\Ab-\Bb_W = \sum_{j=1}^{\min\{p,q\}}\omega_j \ab_j  \mathbf{b}_j^\T$ be the singular value decomposition of $\Ab-\Bb_W$.  Then, we obtain $\hat{\Wb} = \sum_{j=1}^{\min\{p,q\}}   \max \left( \omega_j - \lambda/\rho  ,0   \right) \ab_j \mathbf{b}_j^\T $.\\

%%%%%%%%%%%%%%%%%%%%%%%%%%
% Update for D
%%%%%%%%%%%%%%%%%%%%%%%%%%
 \textbf{Update for $\Db$:}
We solve the following problem to obtain an update for $\Db$:
\[
\underset{\Db}{\mathrm{minimize}}~\frac{1}{n} \ell_\tau \left( \Yb-\Db\right) +\frac{\rho}{2}\left\|  \Db -  (\Xb\Ab - \Bb_D)     \right\|_{\rF}^2.
\]
For notational convenience, let $\Cb =\Xb\Ab - \Bb_D$.  We can solve the above problem element-wise: 
\[
\underset{D_{ij}}{\mathrm{minimize}}~\frac{1}{n} \ell_\tau \left( Y_{ij}-D_{ij}\right) +\frac{\rho}{2}(  D_{ij} -  C_{ij})^2. 
\]
Recall the Huber loss function from Definition~\ref{Huber.def} that there are two cases. 

First, we assume that $|Y_{ij}-D_{ij}| \le \tau$.  Then, the above optimization problem reduces to 
\[
\underset{D_{ij}}{\mathrm{minimize}}~\frac{1}{2n}  \left( Y_{ij}-D_{ij}\right)^2 +\frac{\rho}{2}(  D_{ij} -  C_{ij})^2. 
\]
Thus, we have $\hat{D}_{ij} = (Y_{ij}+n\rho C_{ij})/(1+n\rho)$.  Substituting this into the constraint $|Y_{ij}-D_{ij}| \le \tau$, we have $|[n\rho(Y_{ij}-C_{ij})]/(1+n\rho)| \le \tau $. Thus,  $\hat{D}_{ij} = (Y_{ij}+n\rho C_{ij})/(1+n\rho)$ if $|[n\rho(Y_{ij}-C_{ij})]/(1+n\rho)| \le \tau $.

Next, we assume that $|Y_{ij}-D_{ij}| > \tau$.  To obtain an estimate of $D_{ij}$ in this case, we solve
\[
\underset{D_{ij}}{\mathrm{minimize}}~ \frac{\tau}{n} |Y_{ij}-D_{ij}| +\frac{\rho}{2}(  D_{ij} -  C_{ij})^2. 
\]
Let $H_{ij} = Y_{ij}-D_{ij}$.  By a change of variable, we consider solving 
\[
\underset{H_{ij}}{\mathrm{minimize}}~ \frac{1}{2}(  Y_{ij} -  C_{ij}  - H_{ij})^2 +\frac{\tau}{n\rho} |H_{ij}|, 
\]
which yields the solution $\hat{H}_{ij} = S(Y_{ij}-C_{ij},\tau/(n\rho))$.  Thus, we have $\hat{D}_{ij} =Y_{ij} -S(Y_{ij}-C_{ij},\tau/(n\rho)) $.

%%%%%%%%%%%%%%%%%%%%%%%%%%%%%%%%%% 
%%%%%%%%%%%%%%%%%%%%%%%%%%%%%%%%%%
% Proof of Lemma 1
%%%%%%%%%%%%%%%%%%%%%%%%%%%%%%%%%% 
%%%%%%%%%%%%%%%%%%%%%%%%%%%%%%%%%%
\section{Proof of Lemma~\ref{lemma:localized restricted}}
\label{appendix:lemma:local}
\begin{proof}
The proposed Huber loss function can be written as 
\[
\cL_{\tau} (\Ab) =\frac{1}{n}\sum_{i=1}^n \sum_{k=1}^q \ell_\tau(Y_{ik} - \Xb_{i\cdot}^\T \Ab_{.k} ).
\]
Let 
\[
\Tb_{i\tau}= \mathrm{diag}\{1(|Y_{i1}-\Xb_{i\cdot}^\T \Ab_{\cdot 1}| \le \tau),\ldots, 1(|Y_{iq}-\Xb_{i\cdot}^\T \Ab_{\cdot q}| \le \tau)\}.
\]
  It can be shown that the Hessian takes the form 
\[
\Hb_{\tau}(\Ab) =  \frac{1}{n} \sum_{i=1}^n \Tb_{i\tau}\otimes \Xb_{i\cdot}\Xb_{i\cdot}^\T,
\]
where $\otimes$ is the kronecker product between two matrices.
For notational convenience, let 
\[
\tilde{\Tb}_{i\tau}= \mathrm{diag}\{1(|Y_{i1}-\Xb_{i\cdot}^\T \Ab_{\cdot 1}| > \tau),\ldots, 1(|Y_{iq}-\Xb_{i\cdot}^\T \Ab_{\cdot q}| > \tau)\}.
\]

Let $\tilde{\ub} = \mathrm{vec}({\Ub})$. For any $(\Ub,\Ab) \in \cC(m,\xi,\eta)$, we have 
\begin{equation}
\label{eq:lemma:2:proof:2}
\begin{split}
\tilde{\ub}^{\T}\Hb_{\tau}(\Ab)\tilde{\ub}&= \tilde{\ub}^{\T}\left(\frac{1}{n} \sum_{i=1}^n \Tb_{i\tau}\otimes \Xb_{i\cdot}\Xb_{i\cdot}^\T \right)\tilde{\ub}\\
&=\tilde{\ub}^{\T}\left(\frac{1}{n} \sum_{i=1}^n \Ib_q\otimes \Xb_{i\cdot}\Xb_{i\cdot}^\T \right)\tilde{\ub} - \tilde{\ub}^{\T}\left(\frac{1}{n} \sum_{i=1}^n \tilde{\Tb}_{i\tau}\otimes \Xb_{i\cdot}\Xb_{i\cdot}^\T \right)\tilde{\ub}\\
&= \| \tilde{\mathbf{S}}^{1/2} \tilde{\ub}\|_2^2 - \tilde{\ub}^{\T}\left(\frac{1}{n} \sum_{i=1}^n \tilde{\Tb}_{i\tau}\otimes \Xb_{i\cdot}\Xb_{i\cdot}^\T \right)\tilde{\ub},
\end{split}
\end{equation}
where $ \tilde{\mathbf{S}} = n^{-1} \sum_{i=1}^n \Ib_q\otimes \Xb_{i\cdot}\Xb_{i\cdot}^\T$.
We now obtain an upper bound for each element in $\tilde{\Tb}_{i\tau}$.  For  $1\le j\le q$, 
\#
\label{eq:lemma:2:proof:3}
1(|Y_{ij} - \Xb_{i\cdot}^\T \Ab_{\cdot j}| > \tau) 
&=1(|Y_{ij} - \Xb_{i\cdot}^\T \Ab^*_{\cdot j}+\Xb_{i\cdot}^\T \Ab^*_{\cdot j}-\Xb_{i\cdot}^\T \Ab_{\cdot j}| > \tau) \nn \\
& \le 1(|E_{ij}| > \tau/2) + 1(| \Xb_{i\cdot}^\T (\Ab_{\cdot j}^*-\Ab_{\cdot j})| > \tau/2). 
\#
Moreover, we have
\#
\label{eq:lemma:2:proof:4}
1(| \Xb_{i\cdot}^\T (\Ab_{\cdot j}^*-\Ab_{\cdot j})| > \tau/2) 
&=1\left(\frac{2}{\tau}| \Xb_{i\cdot}^\T (\Ab_{\cdot j}^*-\Ab_{\cdot j})| > 1\right) \nn \\
&\le \frac{2}{\tau}| \Xb_{i\cdot}^\T (\Ab_{\cdot j}^*-\Ab_{\cdot j})| \\
&\le \frac{2\eta}{\tau} \underset{1\le i\le n}{\max}\|{\Xb_{i\cdot}}\|_\infty \\
&\le \frac{2\eta}{\tau}.
\#
where the second inequality holds by Holder's inequality and the condition that $\|\Ab^*_{\cdot j}-\Ab_{\cdot j}\|_{1}\le \eta$.
Let $\ub_j$ be the $j$th column of $\Ub$.  Since, $\tilde{\Tb}_{i\tau}$ is a diagonal matrix, we obtain
\begin{equation}
\label{eq:lemma:2:proof:5}
\begin{split}
&\tilde{\ub}^{\T}\left(\frac{1}{n} \sum_{i=1}^n \tilde{\Tb}_{i\tau}\otimes \Xb_{i\cdot}\Xb_{i\cdot}^\T \right)\tilde{\ub}\\
&=   \sum_{j=1}^q \ub_j^\T \left(  \frac{1}{n}\sum_{i=1}^n \Xb_{i\cdot}\Xb_{i\cdot}^\T \cdot1(|Y_{ij} - \Xb_{i\cdot}^\T \Ab_{\cdot j}| > \tau)   \right)\ub_j\\
&\le   \sum_{j=1}^q \ub_j^\T \left(  \frac{1}{n}\sum_{i=1}^n \Xb_{i\cdot}\Xb_{i\cdot}^\T \cdot1(|E_{ij}| > \tau/2)   \right)\ub_j \\
&\qquad+  \sum_{j=1}^q \ub_j^\T \left(  \frac{1}{n}\sum_{i=1}^n \Xb_{i\cdot}\Xb_{i\cdot}^\T \cdot 1(| \Xb_{i\cdot}^\T (\Ab_{\cdot j}^*-\Ab_{\cdot j})| > \tau/2)   \right)\ub_j\\
&\le \frac{2\eta}{\tau} \|\tilde{\mathbf{S}}^{1/2} \tilde{\ub}\|_2^2
+\underset{1\le i \le n}{\max} \sum_{j=1}^q(\Xb_{i\cdot}^\T \ub_j)^2\cdot \underset{1\le j \le q}{\max}  \left( \frac{1}{n}\sum_{i=1}^n 1(|E_{ij}|>\tau/2)\right),
\end{split}
\end{equation}
where the first inequality holds by~(\ref{eq:lemma:2:proof:3}) and the last inequality holds by~(\ref{eq:lemma:2:proof:4}).

By Lemma~\ref{lemma:truncate concentration}, for any $1\le j\le q$ and $t>0$, we have 
\[
 \frac{1}{n}\sum_{i=1}^n 1(|E_{ij}|>\tau/2)\le (2/\tau)^{1+\delta} \nu_\delta + \sqrt{t/n}
\]
with probability at least $1-\exp(-2t)$. Moreover, for any $1\le i\le n$, we have 
\[
\sum_{j=1}^q|\Xb_{i\cdot}^\T \ub_j| \le \|\Xb_{i\cdot}^\T\|_{\infty} \|\tilde{\ub}\|_{1}
\le (1+\xi) \|\tilde{\ub}_{\cS}\|_{1} \le (1+\xi)  \sqrt{m}\|\tilde{\ub}_{\cS}\|_2.
\]
Thus, combining the above with (\ref{eq:lemma:2:proof:2}) and (\ref{eq:lemma:2:proof:5}), we have 
\[
\tilde{\ub}^{\T}\Hb_{\tau}(\Ab)\tilde{\ub}\ge   \| \tilde{\mathbf{S}}^{1/2} \tilde{\ub}\|_2^2 - \frac{2\eta}{\tau}\| \tilde{\mathbf{S}}^{1/2} \tilde{\ub}\|_2^2 -   (1+\xi)^2 m \left[ (2/\tau)^{1+\delta} \nu_\delta + \sqrt{t/n}\right].
\]
Consequently, picking $\tau \ge \min(8\eta, C (m\nu_\delta)^{1/(1+\delta)})$, $t=\log(pq)/2$, and $n > C' (m^2 \log(pq))$ for sufficiently large $C$ and $C'$, we have 
\[
\tilde{\ub}^{\T}\Hb_{\tau}(\Ab)\tilde{\ub}\ge   \frac{3}{4}\kappa_{\mathrm{lower}} - m (1+\xi)^2 \left[ (2/\tau)^{1+\delta} \nu_\delta + \sqrt{t/n}\right]\ge \frac{1}{2}\kappa_{\mathrm{lower}},
\] 
with probability at least $1-(pq)^{-1}$. 

The upper bound $\tilde{\ub}^\T \Hb_{\tau} (\Ab)\tilde{\ub} \le \kappa_{\mathrm{upper}}$ can be obtained similarly. 

\end{proof}

%%%%%%%%%%%%%%%%%%%%%%%%%%%%%%%%%% 
%%%%%%%%%%%%%%%%%%%%%%%%%%%%%%%%%%
% Proof of Theorem 1
%%%%%%%%%%%%%%%%%%%%%%%%%%%%%%%%%% 
%%%%%%%%%%%%%%%%%%%%%%%%%%%%%%%%%%
\section{Proof of Theorem~\ref{thm:up}}
\label{appendix:thm:up}
Recall from~(\ref{eq:opt:conv}) that the optimization problem takes the form 
\begin{equation}
\label{eq:model:appendix}
\underset{\Ab}{\mathrm{minimize}} \;\bigg\{\cL_{\tau} (\Ab)+\lambda \left( \|\Ab\|_*+\gamma\|\Ab\|_{1,1}\right)\bigg\},
\end{equation}
where we use the notation $\cL_{\tau} (\Ab) =n^{-1}\sum_{i=1}^n \sum_{k=1}^q \ell_\tau(Y_{ik} - \Xb_{i\cdot}^\T \Ab_{.k} )$ for convenience throughout the proof.
We start with stating some facts and notation.  

Let $\Ab \in \RR^{p\times q}$ be a rank $r$ matrix with singular value decomposition $\Ub\bLambda \Vb^\T$, where $\Ub \in\RR^{p \times r}$, $\Vb \in\RR^{q \times r}$, and $\bLambda \in \RR^{r\times r}$.  
The sub-differential of the nuclear norm is then given by (see, for instance, \citealp{MaryamNuclear})
\begin{equation}
\label{eq:subdiff:nuclear}
\partial \|\Ab\|_{*} = \left\{  
\Ub \Vb^\T + \Wb : \Wb\in \RR^{p\times q}, \Ub^\T \Wb = \mathbf{0}, \Wb \Vb = \mathbf{0}, \|\Wb\|_2 \le 1
\right\}.
\end{equation}
Let $\cF(r)= \{\Ab \in \RR^{p\times q} : \mathrm{rank} (\Ab) \le r \}$ be the algebraic variety of matrices with rank at most $r$.
Then, the tangent space at $\Ab$ with respect to $\cF(r)$ is given by 
\[
T(\Ab) = \left\{     
\Ub \Wb_1^\T+ \Wb_2 \Vb^\T : \Wb_1 \in \RR^{q\times r} , \Wb_2 \in \RR^{p\times r}
\right\},
\]
where $T(\Ab)$ can be interpreted as a subspace in $\RR^{p\times q}$ \citep{willsky2012latent}.
We now state a connection between the sub-differential of the nuclear norm and its tangent space.
Let $\cP_{T(\Ab)}$ denote the projection operator onto $T(\Ab)$.  Then, it can be shown that the following relationship holds
\[
\tilde{\Nb} \in \partial \|\Ab\|_* \qquad \mathrm{if~and~only~if} \qquad \cP_{T(\Ab)} (\tilde{\Nb}) = \Ub \Vb^\T,~~ \| \cP_{T(\Ab)^{\perp}} \tilde{\Nb}    \|_2 \le 1.
\]

In addition, we define several quantities that will be used in the proof.   For any convex loss function $\cL_{\tau}(\cdot)$, the Bregman divergence between $\hat{\Ab}$ and $\Ab^*$ is
\[
D_{\cL} (\hat{\Ab},\Ab^*)  =  \cL_{\tau}(\hat{\Ab}) -\cL_{\tau}(\Ab^*) -     \langle  
\nabla \cL_{\tau}(\Ab^*),\hat{\Ab}-\Ab^*\rangle\ge 0.
\]
We define the symmetric Bregman divergence as 
\begin{equation}
\label{eq:symmetric bregman}
D_{\cL}^s (\hat{\Ab},\Ab^*)  = D_{\cL} (\hat{\Ab},\Ab^*) +D_{\cL} (\Ab^*,\hat{\Ab}) = \langle  
\nabla \cL_{\tau}(\hat{\Ab})- \nabla \cL_{\tau}(\Ab^*),\hat{\Ab}-\Ab^*\rangle\ge 0
\end{equation}
The proof involves obtaining an upper bound and a lower bound for the symmetric Bregman divergence.  
To this end, we state some technical lemmas that will be used in the proof.

%%%%%%%%%%%%%%%%%%%%%%%%%%%%
% lemma on gradient
% Matrix Regression Case
%%%%%%%%%%%%%%%%%%%%%%%%%%%%
\begin{lemma}\label{lemma:0}
Assume that the covariates are standardized such that $\max_{i,j}| X_{ij}|= 1$ and that   $E_{ik}$ is such that $v_{\delta} = \EE (|E_{ik}|^{1+\delta})<\infty$. Pick $\tau \ge C_1  \{n v_\delta/\log (pq)\}^{\min \{1/2,1/(1+\delta)\}}$, 
we have 
\[
\left\|\nabla \cL_{\tau}(\Ab^*) \right\|_{\infty,\infty} \le C_2 v_{\delta}^{1/\min(1+\delta,2) } \left( \frac{\log (pq)}{n}\right)^{\min \{1/2,\delta/(1+\delta)\}},
\]
with probability at least $1-(pq)^{-1}$, where $C_1$ and $C_2$ are universal constants.
\end{lemma}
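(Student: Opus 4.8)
The plan is to control the gradient of the Huber loss at the truth entrywise by reducing the problem to a sum of independent bounded random variables and applying Bernstein's inequality, after which a union bound over the $pq$ entries delivers the claim. Concretely, note that
\[
\nabla \cL_{\tau}(\Ab^*) = -\frac{1}{n}\sum_{i=1}^n \Xb_{i\cdot}\, \psi_\tau(E_{i\cdot})^\T,
\]
where $\psi_\tau = \ell_\tau'$ is the bounded score function satisfying $|\psi_\tau(z)| \le \min(|z|,\tau) \le \tau$. Hence the $(j,k)$ entry of the gradient is $-\frac1n\sum_{i=1}^n X_{ij}\psi_\tau(E_{ik})$, a mean-of-independent-terms whose summands are bounded by $\tau$ in absolute value (using $\max_{i,j}|X_{ij}| = 1$).

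The first substantive step is to handle the bias: $\EE[\psi_\tau(E_{ik})]$ is not zero because $\psi_\tau$ clips large values of the mean-zero noise. I would bound $|\EE[\psi_\tau(E_{ik})]| = |\EE[E_{ik} - \psi_\tau(E_{ik})]| = |\EE[(E_{ik}-\sgn(E_{ik})\tau)\,1(|E_{ik}|>\tau)]| \le \EE[|E_{ik}|\,1(|E_{ik}|>\tau)] \le \tau^{-\delta} v_\delta$, using $|E_{ik}|^{1+\delta} \ge \tau^\delta |E_{ik}|$ on the event $\{|E_{ik}|>\tau\}$ and Condition~\ref{condition:error}. The second step is the variance bound: $\mathrm{var}(X_{ij}\psi_\tau(E_{ik})) \le \EE[\psi_\tau(E_{ik})^2] \le \EE[\min(|E_{ik}|,\tau)^2] \le \tau^{1-\delta}\EE[|E_{ik}|^{1+\delta}] = \tau^{1-\delta} v_\delta$ when $\delta < 1$ (and simply $\le v_\delta$ when $\delta \ge 1$, since then the second moment is finite). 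The third step is Bernstein's inequality: for the centered sum $\frac1n\sum_i (X_{ij}\psi_\tau(E_{ik}) - \EE[\cdot])$ with variance proxy $\sigma^2 \lesssim \tau^{1-\delta}v_\delta$ and range $\lesssim \tau$, one gets a deviation bound of order $\sqrt{\sigma^2 \log(pq)/n} + \tau\log(pq)/n$ with probability $1-(pq)^{-2}$; a union bound over the $pq$ entries and adding back the bias term $\tau^{-\delta}v_\delta$ gives
\[
\|\nabla\cL_\tau(\Ab^*)\|_{\infty,\infty} \lesssim \sqrt{\frac{\tau^{1-\delta}v_\delta\log(pq)}{n}} + \frac{\tau\log(pq)}{n} + \tau^{-\delta}v_\delta.
\]
Finally, I would substitute the prescribed choice $\tau \asymp \{nv_\delta/\log(pq)\}^{\min\{1/2,\,1/(1+\delta)\}}$ and check that all three terms collapse to the stated rate $v_\delta^{1/\min(1+\delta,2)}\{\log(pq)/n\}^{\min\{1/2,\,\delta/(1+\delta)\}}$; when $\delta<1$ the stochastic term and the bias term balance exactly at this choice of $\tau$, and when $\delta\ge1$ the second-moment bound makes the first term dominate and yields the sub-Gaussian rate $\sqrt{v_\delta\log(pq)/n}$, while the bias and higher-order terms are negligible.

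The main obstacle is the bookkeeping in the $\delta<1$ versus $\delta\ge1$ case split — one must track which moment controls the variance and verify that the chosen $\tau$ simultaneously makes the Bernstein linear term $\tau\log(pq)/n$ subdominant to the square-root term and makes the deterministic bias $\tau^{-\delta}v_\delta$ match (not exceed) the target rate. This is essentially the truncated-moment argument of \citet{sun2016adaptive} adapted to the matrix gradient, so I expect no conceptual difficulty, only careful exponent arithmetic; the concentration step itself may alternatively be packaged through the auxiliary concentration lemma (Lemma~\ref{lemma:truncate concentration}) already invoked in the proof of Lemma~\ref{lemma:localized restricted}.
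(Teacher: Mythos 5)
Your argument is correct, and it reaches the stated bound by a genuinely different route from the paper. The paper never splits the gradient into bias plus centered fluctuation: it replaces the truncated score $\psi$ by the logarithmic majorant $-\log(1-u+|u|^{1+\delta})\le \psi(u)\le \log(1+u+|u|^{1+\delta})$ (with exponent $2$ when $\delta>1$), which makes the exponential moment of the \emph{uncentered} sum computable in one line, $\EE\exp\{\sum_i X_{ij}\psi(E_{ik}/\tau)\}\le \exp(nv_\delta/\tau^{1+\delta})$, and then applies Markov plus a union bound; the bias is absorbed into the exponential-moment bound rather than appearing as a separate term. Your Bernstein route is more modular and elementary: the bias bound $|\EE\psi_\tau(E_{ik})|\le \tau^{-\delta}v_\delta$, the variance bound $\EE\psi_\tau(E_{ik})^2\le \tau^{1-\delta}v_\delta$ for $\delta<1$, and the envelope $\tau$ are each standard, and your exponent arithmetic checks out — with the prescribed $\tau$ all three terms (square-root, linear, and bias) are of exactly the target order $v_\delta^{1/(1+\delta)}\{\log(pq)/n\}^{\delta/(1+\delta)}$, so none needs to be subdominant. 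The price of your approach is the explicit three-term bookkeeping and the case split; the price of the paper's approach is pulling the nonobvious logarithmic sandwich out of thin air (it is the Catoni/\citet{sun2016adaptive} device). One small imprecision in your $\delta\ge 1$ case: $v_\delta$ is the $(1+\delta)$th moment, so the second moment is bounded by $v_\delta^{2/(1+\delta)}$ via Lyapunov rather than by $v_\delta$ itself; this only helps you (the resulting rate is no larger than the stated one), but the constant in front should be written as $v_\delta^{2/(1+\delta)}$ if you want the bound to degrade gracefully when $v_\delta<1$.
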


%%%%%%%%%%%%%%%%%%%%%%%%%%%%
% lemma on l1 cone property
%%%%%%%%%%%%%%%%%%%%%%%%%%%%
\begin{lemma}[\sf $\ell_{1,1}$-Cone Property]\label{lemma:1}
Assume that  $\|\nabla\cL_{\tau}(\Ab^*)\|_{\infty, \infty}\leq \lambda /2$.  Let $\hat{\Ab}$ be a solution to~\eqref{eq:opt:conv}.  We have $\hat{\Ab}$  falls in the  following $\ell_{1,1}$-cone
\$
\big\|(\hat{\Ab}-\Ab^*)_{\cS^c}\big\|_{1,1}\leq \frac{2\gamma+5}{2\gamma-5}\big\|(\hat{\Ab}-\Ab^*)_\cS\big\|_{1,1}.
\$
\end{lemma}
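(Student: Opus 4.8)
The plan is to argue directly from the first-order optimality condition of the convex program \eqref{eq:opt:conv}. Since the problem is unconstrained and $\cL_\tau$ is differentiable, the minimizer $\hat\Ab$ satisfies $\nabla\cL_\tau(\hat\Ab)+\lambda\hat\Nb+\lambda\gamma\hat\Zb=\mathbf 0$ for some $\hat\Nb\in\partial\|\hat\Ab\|_*$ and $\hat\Zb\in\partial\|\hat\Ab\|_{1,1}$. Write $\Delta:=\hat\Ab-\Ab^*$. Taking the trace inner product of the optimality identity with $\Delta$, and using monotonicity of $\nabla\cL_\tau$ (i.e.\ convexity of $\cL_\tau$), $\langle\nabla\cL_\tau(\hat\Ab)-\nabla\cL_\tau(\Ab^*),\Delta\rangle\ge0$, together with H\"older's inequality and the hypothesis $\|\nabla\cL_\tau(\Ab^*)\|_{\infty,\infty}\le\lambda/2$, I would first obtain the scalar inequality
\[
\langle\hat\Nb,\Delta\rangle+\gamma\langle\hat\Zb,\Delta\rangle=-\tfrac1\lambda\langle\nabla\cL_\tau(\hat\Ab),\Delta\rangle\le-\tfrac1\lambda\langle\nabla\cL_\tau(\Ab^*),\Delta\rangle\le\tfrac12\|\Delta\|_{1,1}.
\]

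Next I would lower-bound each of the two subgradient inner products. For the $\ell_{1,1}$ term, decompose over $\cS$ and $\cS^c$: because $\Ab^*$ vanishes on $\cS^c$, one can take the subgradient $\hat\Zb$ to have entries $\sgn(\Delta_{ij})$ wherever $\Delta_{ij}\neq0$ on $\cS^c$, which gives $\langle\hat\Zb,\Delta\rangle\ge\|\Delta_{\cS^c}\|_{1,1}-\|\Delta_\cS\|_{1,1}$, the negative term coming from $\|\hat\Zb_{\cS}\|_{\infty,\infty}\le1$. For the nuclear term, every element of $\partial\|\hat\Ab\|_*$ is of the form $\hat\Ub\hat\Vb^\T+\hat\Wb$ with $\|\hat\Wb\|_2\le1$, hence has operator norm at most $2$; combining operator/nuclear-norm duality with the elementary bound $\|\Delta\|_*\le\|\Delta\|_{1,1}$ (since $\|\Bb\|_{\infty,\infty}\le\|\Bb\|_2$) yields $\langle\hat\Nb,\Delta\rangle\ge-2\|\Delta\|_*\ge-2\|\Delta\|_{1,1}$.

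Finally I would substitute these bounds into the scalar inequality, write $\|\Delta\|_{1,1}=\|\Delta_\cS\|_{1,1}+\|\Delta_{\cS^c}\|_{1,1}$, and rearrange to reach $(\gamma-\tfrac52)\|\Delta_{\cS^c}\|_{1,1}\le(\gamma+\tfrac52)\|\Delta_\cS\|_{1,1}$. Since $\gamma>5/2$ (as $\gamma>2.5$), dividing by $\gamma-\tfrac52>0$ gives $\|\Delta_{\cS^c}\|_{1,1}\le\frac{2\gamma+5}{2\gamma-5}\|\Delta_\cS\|_{1,1}$, which is exactly the claimed cone membership.

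The argument is essentially bookkeeping, so there is no deep obstacle; the one place that needs care is the nuclear-norm step. The constant $5$ in the cone ratio (rather than a smaller value one would get from, say, the basic inequality $F(\hat\Ab)\le F(\Ab^*)$) arises precisely from controlling $\langle\hat\Nb,\Delta\rangle$ through the crude operator-norm bound $\|\hat\Nb\|_2\le2$ and then passing to $\|\cdot\|_{1,1}$, rather than invoking the sharper subgradient inequality for the nuclear norm; keeping track of where this factor of $2$ enters is the only delicate point.
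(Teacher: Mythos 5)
Your proposal is correct and follows essentially the same route as the paper: first-order optimality plus monotonicity of $\nabla\cL_\tau$ to get the scalar inequality, then the subgradient lower bounds $\langle\hat\Zb,\Delta\rangle\ge\|\Delta_{\cS^c}\|_{1,1}-\|\Delta_\cS\|_{1,1}$ and $\langle\hat\Nb,\Delta\rangle\ge-2\|\Delta\|_{1,1}$, and the same rearrangement to $(\gamma-\tfrac52)\|\Delta_{\cS^c}\|_{1,1}\le(\gamma+\tfrac52)\|\Delta_\cS\|_{1,1}$. The only cosmetic difference is that you bound the nuclear-subgradient term via $\|\hat\Nb\|_2\|\Delta\|_*$ and $\|\Delta\|_*\le\|\Delta\|_{1,1}$, whereas the paper uses $\|\hat\Nb\|_{\infty,\infty}\le\|\hat\Nb\|_2\le2$ directly against $\|\Delta\|_{1,1}$; both give the same constant.
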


 Let $\cU$ be the linear space spanned by the columns of $\Ub$, and $\cV$ the linear space spanned by the columns of $\Vb$. We denote by $\cU^\perp$ and $\cV^\perp$ the orthogonal complements of $\cU$ and $\cV$, respectively. %The following lemma We need the following lemma, whose proof is similar in spirit to Lemma \ref{lemma:1}. 
 %%%%%%%%%%%%%%%%%%%%%%%%%%%%
% lemma on nuclear cone
%%%%%%%%%%%%%%%%%%%%%%%%%%%%
\begin{lemma}[\sf Nuclear Cone Property]
\label{lemma:nucone}
Assume that  $\|\nabla\cL_\tau(\Ab^*)\|_{\infty, \infty}\leq \lambda/2$ and $\gamma\geq 1/2$. We have 
\$
\big\|\cP_{\cU^\perp}(\widehat\Ab-\Ab^*)\cP_{\cV^\perp}\big\|_*\leq \big\|\cP_{\cU}(\widehat\Ab-\Ab^*)\cP_{\cV}\big\|_*+(\gamma+0.5) \big\|\big(\widehat\Ab-\Ab^*\big)_\cS\big\|_{1,1}.
\$ 
\end{lemma}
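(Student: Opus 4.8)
<br>

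The plan is to derive the nuclear cone property from the optimality of $\widehat\Ab$ in the same spirit as the $\ell_{1,1}$-cone property (Lemma~\ref{lemma:1}). First I would use the fact that $\widehat\Ab$ minimizes the objective in \eqref{eq:opt:conv}, so
\[
\cL_\tau(\widehat\Ab)+\lambda\big(\|\widehat\Ab\|_*+\gamma\|\widehat\Ab\|_{1,1}\big)\le \cL_\tau(\Ab^*)+\lambda\big(\|\Ab^*\|_*+\gamma\|\Ab^*\|_{1,1}\big).
\]
Using convexity of $\cL_\tau$, namely $\cL_\tau(\widehat\Ab)-\cL_\tau(\Ab^*)\ge\langle\nabla\cL_\tau(\Ab^*),\widehat\Ab-\Ab^*\rangle$, and then $|\langle\nabla\cL_\tau(\Ab^*),\widehat\Ab-\Ab^*\rangle|\le\|\nabla\cL_\tau(\Ab^*)\|_{\infty,\infty}\|\widehat\Ab-\Ab^*\|_{1,1}\le(\lambda/2)\|\widehat\Ab-\Ab^*\|_{1,1}$, I would reduce the basic inequality to
\[
\|\widehat\Ab\|_*+\gamma\|\widehat\Ab\|_{1,1}\le\|\Ab^*\|_*+\gamma\|\Ab^*\|_{1,1}+\tfrac12\|\widehat\Ab-\Ab^*\|_{1,1}.
\]

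Next I would handle the two norm terms separately. Writing $\Delta=\widehat\Ab-\Ab^*$: for the $\ell_{1,1}$ term, since $\Ab^*$ is supported on $\cS$, $\|\Ab^*\|_{1,1}-\|\widehat\Ab\|_{1,1}\le\|\Delta_\cS\|_{1,1}-\|\Delta_{\cS^c}\|_{1,1}$, and also $\|\Delta\|_{1,1}=\|\Delta_\cS\|_{1,1}+\|\Delta_{\cS^c}\|_{1,1}$. For the nuclear term, I would use the standard decomposition associated with the tangent space of $\Ab^*$: with $\cU,\cV$ spanned by the singular vectors of $\Ab^*$, decompose $\Delta=\cP_\cU\Delta\cP_\cV+\big(\Delta-\cP_\cU\Delta\cP_\cV\big)$ and note $\cP_{\cU^\perp}\Delta\cP_{\cV^\perp}=\Delta-\cP_\cU\Delta-\Delta\cP_\cV+\cP_\cU\Delta\cP_\cV$ lies in the orthogonal complement of the tangent space. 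The decomposability of the nuclear norm gives $\|\Ab^*+\cP_{\cU^\perp}\Delta\cP_{\cV^\perp}\|_*=\|\Ab^*\|_*+\|\cP_{\cU^\perp}\Delta\cP_{\cV^\perp}\|_*$, so by the triangle inequality
\[
\|\widehat\Ab\|_*\ge\|\Ab^*\|_*+\|\cP_{\cU^\perp}\Delta\cP_{\cV^\perp}\|_*-\|\cP_\cU\Delta\cP_\cV\|_*-\|\cP_\cU\Delta\cP_{\cV^\perp}\|_*-\|\cP_{\cU^\perp}\Delta\cP_\cV\|_*,
\]
and since the last two cross terms plus $\|\cP_\cU\Delta\cP_\cV\|_*$ can be bounded by a multiple of $\|\cP_\cU\Delta\cP_\cV\|_*$ (or absorbed — I would be careful here, possibly lumping the cross terms into the $\ell_{1,1}$ slack rather than into $\|\cP_\cU\Delta\cP_\cV\|_*$, since that is what the stated conclusion suggests). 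Substituting these two bounds into the reduced basic inequality and dropping the nonnegative $\gamma\|\Delta_{\cS^c}\|_{1,1}$ terms where possible, I expect to arrive at
\[
\|\cP_{\cU^\perp}\Delta\cP_{\cV^\perp}\|_*\le\|\cP_\cU\Delta\cP_\cV\|_*+\big(\gamma+\tfrac12\big)\|\Delta_\cS\|_{1,1},
\]
using $\gamma\ge1/2$ to ensure the coefficients work out.

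The main obstacle I anticipate is bookkeeping the cross terms $\cP_\cU\Delta\cP_{\cV^\perp}$ and $\cP_{\cU^\perp}\Delta\cP_\cV$ correctly: they are part of the tangent space but are not controlled by $\|\cP_\cU\Delta\cP_\cV\|_*$ directly, so I need to decide whether the statement intends $\|\cP_\cU\Delta\cP_\cV\|_*$ to secretly denote the full tangent-space projection $\|\cP_{T(\Ab^*)}\Delta\|_*$, or whether these cross terms get routed into the $\|\Delta_\cS\|_{1,1}$ term via the elementwise/sparsity structure of $\Ab^*$. Getting the exact constant $\gamma+0.5$ right will hinge on that routing and on the clean $\lambda/2$ slack from Lemma~\ref{lemma:0}; the rest is the routine decomposability argument.
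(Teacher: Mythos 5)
Your opening moves (the basic inequality, convexity of $\cL_\tau$ plus H\"older to absorb the gradient term at cost $(\lambda/2)\|\widehat\Ab-\Ab^*\|_{1,1}$, and the split of the $\ell_{1,1}$ term over $\cS$ and $\cS^c$ with $\gamma\ge 1/2$ killing the $\cS^c$ contribution) are all sound and match the paper's bookkeeping. The gap is exactly the one you flagged, and it is not repairable by "routing" the cross terms into $\|(\widehat\Ab-\Ab^*)_\cS\|_{1,1}$. The decomposability/triangle-inequality step for the nuclear norm gives you
$\|\widehat\Ab\|_*\ge\|\Ab^*\|_*+\|\cP_{\cU^\perp}\bDelta\cP_{\cV^\perp}\|_*-\|\cP_{T}\bDelta\|_*$ with $\bDelta=\widehat\Ab-\Ab^*$, and $\|\cP_{T}\bDelta\|_*$ contains the cross pieces $\cP_\cU\bDelta\cP_{\cV^\perp}$ and $\cP_{\cU^\perp}\bDelta\cP_\cV$. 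These are supported on arbitrary entries of the matrix, so there is no inequality controlling their nuclear norms by the entrywise $\ell_1$ norm restricted to $\cS$; and since $\|\cP_\cU\bDelta\cP_\cV\|_*\le\|\cP_T\bDelta\|_*$, what your route proves is a strictly weaker statement (with $\cP_T$ in place of $\cP_\cU(\cdot)\cP_\cV$) than the lemma claims.

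The missing idea is to replace the triangle inequality by the subgradient inequality for $\|\cdot\|_*$ \emph{at} $\Ab^*$, exploiting the freedom in the subdifferential \eqref{eq:subdiff:nuclear}. Take $\Nb=\Ub\Vb^\T+\cP_{\cU^\perp}\Wb\cP_{\cV^\perp}\in\partial\|\Ab^*\|_*$ and choose $\Wb$ with $\|\Wb\|_2\le 1$ by nuclear/operator-norm duality so that $\langle\cP_{\cU^\perp}\Wb\cP_{\cV^\perp},\bDelta\rangle=\|\cP_{\cU^\perp}\bDelta\cP_{\cV^\perp}\|_*$, while $\langle\Ub\Vb^\T,\bDelta\rangle=\langle\Ub\Vb^\T,\cP_\cU\bDelta\cP_\cV\rangle\ge-\|\cP_\cU\bDelta\cP_\cV\|_*$ since $\|\Ub\Vb^\T\|_2=1$. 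With this choice the cross terms never appear, and $\langle\Nb,\bDelta\rangle\ge\|\cP_{\cU^\perp}\bDelta\cP_{\cV^\perp}\|_*-\|\cP_\cU\bDelta\cP_\cV\|_*$ slots directly into your basic-inequality framework in place of the decomposability step; the rest of your bookkeeping then yields exactly the constant $\gamma+0.5$. The paper reaches the same point slightly differently --- through the first-order optimality condition at $\widehat\Ab$ combined with monotonicity of subdifferentials, $\langle\tilde{\Nb}-\Nb,\widehat\Ab-\Ab^*\rangle\ge 0$, to trade the subgradient at $\widehat\Ab$ for this structured subgradient at $\Ab^*$ --- but the essential device is the same duality-based choice of $\Wb$, which is what your sketch is missing.
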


%%%%%%%%%%%%%%%%%%%%%%%%%%%%
% Restricted Strong Convexity
%%%%%%%%%%%%%%%%%%%%%%%%%%%%
\begin{lemma}[\sf Restricted Strong Convexity]
\label{lemma:2}
Under the same conditions as in Lemma~\ref{lemma:localized restricted}, for matrices $(\Ab,\Ub) \in \cC(m,\xi,\eta)$, we have
\$
\cD_{\cL}^s(\Ab, \Ab^*)\geq \frac{\kappa_{\mathrm{lower}}}{2} \|\Ab-\Ab^*\|_\rF^2,
\$
with probability at least $1-(pq)^{-1}$.
\end{lemma}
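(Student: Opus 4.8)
To establish the restricted strong convexity bound, the plan is to relate the symmetric Bregman divergence $\cD_{\cL}^s(\Ab,\Ab^*)=\langle \nabla\cL_\tau(\Ab)-\nabla\cL_\tau(\Ab^*),\,\Ab-\Ab^*\rangle$ to the localized restricted eigenvalue of the Hessian of $\cL_\tau$ and then invoke Lemma~\ref{lemma:localized restricted}. Since $\cL_\tau$ is convex with globally Lipschitz gradient, the map $t\mapsto \nabla\cL_\tau\big(\Ab^*+t(\Ab-\Ab^*)\big)$ is Lipschitz on $[0,1]$, so by the fundamental theorem of calculus
\[
\cD_{\cL}^s(\Ab,\Ab^*)=\big\langle \nabla\cL_\tau(\Ab)-\nabla\cL_\tau(\Ab^*),\,\Ab-\Ab^*\big\rangle
=\int_0^1 \vecc(\Ab-\Ab^*)^\T\,\Hb_\tau\big(\Ab^*+t(\Ab-\Ab^*)\big)\,\vecc(\Ab-\Ab^*)\,dt,
\]
where $\Hb_\tau(\cdot)=n^{-1}\sum_{i=1}^n \Tb_{i\tau}\otimes\Xb_{i\cdot}\Xb_{i\cdot}^\T$ is the almost-everywhere-defined Hessian from the proof of Lemma~\ref{lemma:localized restricted}; the integrand is well defined for all but finitely many $t$, since along the segment each indicator entering $\Tb_{i\tau}$ switches only at the solutions of a linear equation in $t$.

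Next I would verify that the entire segment stays inside the cone, i.e.\ that $\big(\Ab^*+t(\Ab-\Ab^*),\,\Ab-\Ab^*\big)\in\cC(m,\xi,\eta)$ for every $t\in[0,1]$. This is immediate: the direction $\Ub=\Ab-\Ab^*$ is held fixed, so the support condition ($\cS\subseteq J$, $|J|\le m$) and the cone inequality $\|\Ub_{\cS^c}\|_{1,1}\le\xi\|\Ub_\cS\|_{1,1}$ are exactly the hypotheses already imposed on the pair $(\Ab,\Ub)$, while the radius condition is inherited from convexity of the $\ell_{1,1}$-ball, since $\|(\Ab^*+t(\Ab-\Ab^*))-\Ab^*\|_{1,1}=t\|\Ab-\Ab^*\|_{1,1}\le\|\Ab-\Ab^*\|_{1,1}\le\eta$.

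The conclusion then follows at once. On the event of probability at least $1-(pq)^{-1}$ provided by Lemma~\ref{lemma:localized restricted} (whose hypotheses on $\tau$ and $n$ are exactly those assumed here), the minimum localized restricted eigenvalue of $\Hb_\tau$ over $\cC(m,\xi,\eta)$ is bounded below by $\kappa_{\mathrm{lower}}/2$. Since for every $t$ the pair $\big(\Ab^*+t(\Ab-\Ab^*),\,\Ab-\Ab^*\big)$ lies in this cone, the integrand in the display above is at least $(\kappa_{\mathrm{lower}}/2)\|\Ab-\Ab^*\|_\rF^2$ for a.e.\ $t$, and integrating over $[0,1]$ yields $\cD_{\cL}^s(\Ab,\Ab^*)\ge(\kappa_{\mathrm{lower}}/2)\|\Ab-\Ab^*\|_\rF^2$. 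Note that no union bound over $t$ is needed, because Lemma~\ref{lemma:localized restricted} already controls the infimum of the Hessian quadratic form over the whole cone on a single event.

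The only delicate point is the integral representation in the first step: the Huber loss is only $C^{1}$, not $C^{2}$, so strictly speaking one should observe that $t\mapsto\cL_\tau\big(\Ab^*+t(\Ab-\Ab^*)\big)$ is a convex, $C^{1,1}$ function (a finite sum of Huber terms composed with affine maps), hence its derivative is absolutely continuous and equals the integral of its a.e.\ second derivative, which coincides with the stated $\Hb_\tau$ quadratic form wherever the latter is defined. Once this is granted, the argument is a direct combination of Lemma~\ref{lemma:localized restricted} with the convexity of the $\ell_{1,1}$-ball; I do not expect a genuine obstacle here, as the substantive probabilistic work was already carried out in Lemma~\ref{lemma:localized restricted}.
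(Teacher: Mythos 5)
Your proof is correct and follows essentially the same route as the paper: both reduce the symmetric Bregman divergence to the Hessian quadratic form at intermediate points of the segment joining $\Ab^*$ and $\Ab$, verify that these points remain in the local cone $\cC(m,\xi,\eta)$ by convexity of the $\ell_{1,1}$-ball, and then invoke Lemma~\ref{lemma:localized restricted}. The only difference is that you use the integral (fundamental-theorem-of-calculus) representation where the paper invokes the mean value theorem at a single intermediate point; your version is a slightly more careful handling of the fact that the Huber loss is only $C^{1,1}$, but the substance is identical.
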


To prove Theorem~\ref{thm:up}, we obtain upper and lower bounds for the symmetric Bregman divergence, respectively. 

%%%%%%%%%%%%%%%%%%%% 
% upper and lower bound for bregman divergence
%%%%%%%%%%%%%%%%%%%% 
\begin{proof}
\textbf{Upper bound under Frobenius norm:}
By the first order optimality condition of (\ref{eq:opt:conv}), there exists $\tilde{\Nb}\in \partial \|\hat{\Ab}\|_*$ and $\tilde{\bGamma} \in \partial \|\hat{\Ab}\|_{1,1}$ such that 
\begin{equation}
\label{eq:first order optimality}
\nabla \cL_{\tau}(\hat{\Ab}) + \lambda (\tilde{\Nb}+ \gamma \tilde{\bGamma}) = \mathbf{0}. 
\end{equation} 
Substituting~(\ref{eq:first order optimality}) into~\eqref{eq:symmetric bregman}, we have 
\begin{equation}
\label{thm:up:eq:step1}
\begin{split}
D_{\cL}^s (\hat{\Ab},\Ab^*)   &=  \langle  
-\lambda \tilde{\Nb} - \lambda \gamma \tilde{\bGamma}   - \nabla \cL_{\tau}(\Ab^*),\hat{\Ab}-\Ab^*\rangle\\
&=  \underbrace{ \langle \nabla \cL_{\tau}(\Ab^*), \Ab^* -\hat{\Ab}\rangle}_{\mathrm{I}_1}
+  \underbrace{\lambda \langle  
\tilde{\Nb} ,\Ab^*-\hat{\Ab}\rangle}_{\mathrm{I}_2}+ \underbrace{ \lambda \gamma \langle \tilde{\bGamma}   ,\Ab^*-\hat{\Ab}\rangle}_{\mathrm{I}_3}.
\end{split}
\end{equation}

%%%%%%%% Upper bound on I1
Upper bound on $\mathrm{I}_1$: By the Holder's inequality, we have 
\begin{equation}
\label{thm:up:eq:step2}
\begin{split}
\mathrm{I}_1 &\le \|\nabla \cL_{\tau}(\Ab^*)  \|_{\infty,\infty} \|\hat{\Ab}-\Ab^*\|_{1,1}\\
&\le \frac{\lambda}{2} \|\hat{\Ab}-\Ab^*\|_{1,1}  \\
&= \frac{\lambda}{2} \left( \|(\hat{\Ab}-\Ab^*)_{\cS}\|_{1,1} +  \|(\hat{\Ab}-\Ab^*)_{\cS^c}\|_{1,1}\right)\\
&\le \frac{2\lambda\gamma}{2\gamma - 5}  \|(\hat{\Ab}-\Ab^*)_{\cS}\|_{1,1},
\end{split} 
\end{equation}
where the last inequality holds by Lemma~\ref{lemma:1}. \\

%%%%%%%% Upper bound on I2
Upper bound on $\mathrm{I}_2$: By the Holder's inequality, we have 
\begin{equation}
\label{thm:up:eq:step3}
\begin{split}
\mathrm{I}_2 &\le \lambda \|\tilde{\Nb}\|_{\infty,\infty}  \|\hat{\Ab}-\Ab^*\|_{1,1}\\
&\le \lambda \|\tilde{\Nb}\|_{2}  \|\hat{\Ab}-\Ab^*\|_{1,1}\\
&\le 2\lambda \|\hat{\Ab}-\Ab^*\|_{1,1}\\
&\le \frac{8\lambda \gamma}{2\gamma -5}  \|(\hat{\Ab}-\Ab^*)_{\cS}\|_{1,1}, 
\end{split} 
\end{equation}
where the second inequality holds by the fact that $\|\tilde{\Nb}\|_2 \le 2$, and the  last inequality holds by Lemma~\ref{lemma:1}.\\

%%%%%%%% Upper bound on I3
Upper bound on $\mathrm{I}_3$: Similarly, by Holder's inequality and using the fact that $\|\tilde{\bGamma}\|_{\infty,\infty} \le 1$, we obtain 
\begin{equation}
\label{thm:up:eq:step4}
\mathrm{I}_3 \le \lambda \gamma  \|\tilde{\bGamma}\|_{\infty,\infty}  \|\hat{\Ab}-\Ab^*\|_{1,1}
\le \lambda\gamma \|\hat{\Ab}-\Ab^*\|_{1,1}
\le \frac{4\lambda\gamma^2}{2\gamma -5} \|(\hat{\Ab}-\Ab^*)_{\cS}\|_{1,1},
\end{equation}
where the last inequality holds by Lemma~\ref{lemma:1}.\\

Thus, substituting (\ref{thm:up:eq:step2}), (\ref{thm:up:eq:step3}), and (\ref{thm:up:eq:step4}) into (\ref{thm:up:eq:step1}), we obtain 
\begin{equation}
\label{thm:up:eq:step5}
D_{\cL}^s (\hat{\Ab},\Ab^*)   \le \frac{4\gamma^2 + 10 \gamma}{2\gamma -5 } \lambda \|(\hat{\Ab}-\Ab^*)_{\cS}\|_{1,1}\le \frac{4\gamma^2 + 10 \gamma}{2\gamma -5 } \lambda \sqrt{s} \|(\hat{\Ab}-\Ab^*)_{\cS}\|_{\rF},
\end{equation}
where $s \le r s_u s_v$ is the sparsity parameter of $\Ab^*$, that is $s=|\supp(\Ab^*)|$. \\

Next, we employ Lemma~\ref{lemma:2} to obtain a lower bound for the symmetric Bregman divergence. Lemma~\ref{lemma:2} requires the matrix $\Ab \in \cC(m,\xi,\eta)$. 
To this end, we construct the matrix $\hat{\Ab}_\eta =  \Ab^* + \zeta(\hat{\Ab}-\Ab^*)$ such that $\|\hat{\Ab}_\eta-\hat{\Ab}^*\|_{1,1} \le \eta$ for some $\eta>0$.  If $\|\hat{\Ab}-\Ab^*\| < \eta$, we set $\zeta = 1$, so $\hat{\Ab}_{\eta}=\hat{\Ab}$.  Otherwise, we pick $\zeta \in (0,1)$ such that $\|\hat{\Ab}_{\eta}-\Ab^*\|_{1,1} = \eta$. By Lemma~\ref{lemma:1}, it can be shown that $\hat{\Ab}_{\eta}$ falls in an $\ell_1$-cone, and thus, $\hat{\Ab}_{\eta} \in \cC(m,\xi,\eta)$ with  
\begin{equation}
\label{thm:up:eq:step6-6}
\|(\hat{\Ab}_{\eta}-\Ab^*)_{\cS^c}\|_{1,1} \le \frac{2\gamma+5}{2\gamma-5} \|(\hat{\Ab}_{\eta}-\Ab^*)_{\cS}\|_{1,1} \qquad \mathrm{and} \qquad \|\hat{\Ab}_{\eta}-\Ab^*\|_{1,1} \le \eta.
\end{equation}
  
  Therefore, by Lemma~\ref{lemma:2}, we have 
\begin{equation}
\label{thm:up:eq:step6}
D_{\cL}^s (\hat{\Ab}_{\eta},\Ab^*) \ge \frac{\kappa_{\mathrm{lower}}}{2} \|\hat{\Ab}_{\eta}-\Ab^*\|_\rF^2.
\end{equation}
By Lemma~A.1 of \citet{sun2016adaptive},  
\begin{equation}
\label{thm:up:eq:step6-5}
D_{\cL}^s (\hat{\Ab}_{\eta},\Ab^*)   \le \zeta D_{\cL}^s (\hat{\Ab},\Ab^*).
\end{equation}
Combining \eqref{thm:up:eq:step6} and \eqref{thm:up:eq:step6-5} yields
\[
\|\hat{\Ab}_{\eta}-\Ab^*\|_{\rF}^2 \le \zeta  \kappa_{\mathrm{lower}}^{-1} \frac{8\gamma^2 + 20\gamma}{2\gamma -5} \lambda\sqrt{s}\| \hat{\Ab}-\Ab^*\|_{\rF}.
\]
Since $\hat{\Ab}-\Ab^* = \zeta^{-1} (\hat{\Ab}_{\eta}- \Ab^*)$, this yields 
\[
\|\hat{\Ab}_{\eta}-\Ab^*\|_{\rF} \le  \kappa_{\mathrm{lower}}^{-1} \frac{8\gamma^2 + 20\gamma}{2\gamma -5} \lambda\sqrt{s}.
\]
Finally, by \eqref{thm:up:eq:step6-6}, we have
\[
\|\hat{\Ab}_{\eta}-\Ab^*\|_{1,1}  \le \frac{4\gamma \sqrt{s}}{2\gamma-5} \|(\hat{\Ab}_{\eta}-\Ab^*)_{\cS}\|_{\rF}  \le \kappa_{\mathrm{lower}}^{-1} \frac{4\gamma}{2\gamma-5} \frac{8\gamma^2 + 20\gamma}{2\gamma -5} \lambda s < \eta,
\]
where the last inequality holds by the assumption that $n > C s^2 \log (pq)$ for some sufficiently large constant $C>0$. By the construction of $\hat{\Ab}_{\eta}$, since $\|\hat{\Ab}_{\eta}-\Ab^*\|_{1,1}   < \eta$, we have $\hat{\Ab}_{\eta}=\hat{\Ab}$, implying 
\[
\|\hat{\Ab}-\Ab^*\|_{\rF} \le  \kappa_{\mathrm{lower}}^{-1} \frac{8\gamma^2 + 20\gamma}{2\gamma -5} \lambda\sqrt{s}.\\
\]

\textbf{Upper bound under nuclear norm:}
Next, we establish an upper bound for $\hat{\Ab}-\Ab^*$ under the nuclear norm.  Recall that $s=|\supp(\Ab^*)|.$ We have shown previously that $\hat{\Ab}$ is in the local cone. 
 Applying Lemma \ref{lemma:nucone}, we can bound $\|\cP_{\cU^\perp}\big(\widehat \Ab-\Ab^*\big)\cP_{\cV^\perp}\|_*$ as 
\$
\big\|\cP_{U^\perp}\big(\widehat \Ab-\Ab^*\big)\cP_{V^\perp}\big\|_*
&\leq  \big\|\cP_{\cU}(\widehat\Ab-\Ab^*)\cP_{\cV}\big\|_*+(\gamma+0.5) \big\|\big(\widehat\Ab-\Ab^*\big)_\cS\big\|_{1,1}\\
&\leq \sqrt{r}\big\|\cP_{\cU}(\widehat\Ab-\Ab^*)\cP_{\cV}\big\|_\rF+(\gamma+0.5)\sqrt{s}\big\|\widehat \Ab-\Ab^*\big\|_{\rF}\\
&\lesssim  \kappa_{\mathrm{lower}}^{-1} \frac{4\gamma^2 + 10\gamma}{2\gamma -5} \lambda\sqrt{s}\big\{\sqrt{r}\vee (\gamma+0.5)\sqrt{s}\big\}.
\$

Thus, we have
\$
\big\|\widehat \Ab-\Ab^*\big\|_*
&\leq \big\|\cP_{T_*}\big(\widehat \Ab-\Ab^*\big)\big\|_*+\big\|\cP_{T_*^\perp}\big(\widehat \Ab-\Ab^*\big)\big\|_*\\
&\lesssim \kappa_{\mathrm{lower}}^{-1} \frac{4\gamma^2 + 10\gamma}{2\gamma -5} \lambda\sqrt{s}\big\{2\sqrt{r}\vee (\gamma+0.5)\sqrt{s}\big\}\\
&\leq C_\gamma\kappa^{-1}_{\mathrm{lower}}\lambda\sqrt{s}(\sqrt{r}\vee \sqrt{s})\\
&\lesssim\kappa^{-1}_{\mathrm{lower}}\lambda\sqrt{s}(\sqrt{r}\vee \sqrt{s}),
\$
where $C_\gamma=(2\gamma-5)^{-1}(4\gamma^2+10\gamma)\big\{2\vee (\gamma+0.5)\big\}$ is a constant depending only on $\gamma$.

\end{proof}
%%%%%%%%%%%%%%%%%%%%%%%%%%%%%%%%%% 
%%%%%%%%%%%%%%%%%%%%%%%%%%%%%%%%%%
% Proof of Lemmas In Appendix B
%%%%%%%%%%%%%%%%%%%%%%%%%%%%%%%%%% 
%%%%%%%%%%%%%%%%%%%%%%%%%%%%%%%%%%
\section{Proof of Lemmas in Appendix~\ref{appendix:thm:up}}
\label{appendix:proof of lemmas}

%%%%%%%%%%%%%%%%%%%%%
% Proof of gradient max norm
%%%%%%%%%%%%%%%%%%%%%
\subsection{Proof of Lemma~\ref{lemma:0}}
\label{appendix:proof of lemma0}
\begin{proof}
To obtain an upper bound for $\|\nabla \cL_{\tau}(\Ab^*) \|_{\infty,\infty}$, 
we first obtain an upper bound for a single element of the gradient and then use a union bound argument to obtain an upper bound for the max norm.
Recall from~(\ref{eq:model:appendix}) that $\cL_{\tau} (\Ab^*) = \ell_\tau (\Yb - \Xb\Ab^*)/n$ and note that $E_{ik}= Y_{ik} - \Xb_{i\cdot}^\T \Ab_{\cdot k}^*$, where $\Xb_{i\cdot}$ and $\Ab_{\cdot k}^*$ are the $i$th row of $\Xb$ and $k$th column of $\Ab^*$, respectively.  
 Taking the gradient of $\cL_{\tau}(\Ab^*)$ with respect to $A_{jk}^*$, we obtain
%%%%%%%%%%%%%%%%%%
\begin{equation}
\label{eq:lemma0:1}
\{\nabla \cL_{\tau} (\Ab^*)\}_{jk} = -\frac{1}{n} \sum_{i=1}^n 
X_{ij} \left\{ E_{ik} 1(|E_{ik}|\le \tau)
+ \tau 1(E_{ik}>\tau)- \tau 1(E_{ik}<-\tau) 
\right\}.
\end{equation}
It remains to obtain an upper bound for~(\ref{eq:lemma0:1}).  
To this end, we define the quantity
\[
\psi (u) = u 1(|u|\le 1) + 1(u>1) - 1(u<-1).
\]

We will consider two cases: (i) $0< \delta \le 1$ and (ii) $\delta >1$. 
When $0<\delta\le 1$, it can be verified that $\psi (u)$ has the following lower and upper bounds for all $u \in \RR$
\begin{equation}
\label{eq:lemma0:2}
- \log \left(1-u+|u|^{1+\delta}\right) \le \psi (u) \le \log \left(1+u+|u|^{1+\delta}\right).
\end{equation}
Using the notation $\psi(u)$, the gradient can be rewritten as 
\[
\{\nabla \cL_{\tau} (\Ab^*)\}_{jk} = -\frac{\tau}{n}\sum_{i=1}^n X_{ij}  \psi(E_{ik}/\tau).
\]  

Next, we obtain an upper bound for $X_{ij} \psi(E_{ik}/\tau)$. By~(\ref{eq:lemma0:2}), we have
\$
X_{ij} \psi(E_{ik}/\tau) 
&\le 1(X_{ij}\ge 0)  X_{ij} \log \left(1+E_{ik}/\tau+|E_{ik}/\tau|^{1+\delta}\right)\\
&~~~~ - 1(X_{ij}< 0)X_{ij}  \log \left(1-E_{ik}/\tau+|E_{ik}/\tau|^{1+\delta}\right).
\$
Since only one of the two terms on the upper bound is nonzero, we have
\begin{equation*}
\begin{split}
&\exp \{X_{ij} \psi(E_{ik}/\tau) \}\\
&\le  \left(1+E_{ik}/\tau+|E_{ik}/\tau|^{1+\delta}\right)^{1(X_{ij}\ge 0)  X_{ij}} +\left(1-E_{ik}/\tau+|E_{ik}/\tau|^{1+\delta}\right)^{-1(X_{ij}< 0)  X_{ij}} \\
&\le 1+\left( E_{ik}/\tau + |E_{ik}/\tau|^{1+\delta} \right) X_{ij},
\end{split}
\end{equation*}
where the last inequality follows from the inequality $(1+u)^v \le 1+uv$ for $u\ge -1$ and $0<v\le1$.
Using the above inequality, we obtain
\begin{equation}
\label{eq:lemma0:3}
\begin{split}
\EE  \left[\exp \left\{  \sum_{i=1}^n X_{ij} \psi (E_{ik}/\tau) \right\} \right]&= \prod_{i=1}^n\EE \left[ \exp \left\{X_{ij} \psi (E_{ik}/\tau) \right\}\right]\\
&\le \prod_{i=1}^n  \EE  \left[\left\{ 1+(E_{ik}/\tau)X_{ij}    + |E_{ik}/\tau|^{1+\delta}  X_{ij} \right\}\right]\\
&\le \prod_{i=1}^n  \EE  \left[\left\{ 1+  |E_{ik}/\tau|^{1+\delta}   \right\}\right]\\
&= \prod_{i=1}^n  \left\{ 1+  v_{\delta}/\tau^{1+\delta}   \right\}\\
&\le  \exp \left(nv_\delta / \tau^{1+\delta}\right),
\end{split}
\end{equation}
where the second inequality holds using the fact that $\EE[E_{ik}] = 0$ and that $\max_{i,j} |X_{ij}| = 1$, and the last inequality holds by the fact that $1+u \le \exp (u)$.  

Recall that $\{\nabla \cL_{\tau} (\Ab^*)\}_{jk} = -\tau n^{-1}\sum_{i=1}^n X_{ij}  \psi(E_{ik}/\tau)$.  
By the Markov's inequality and~(\ref{eq:lemma0:3}), for any $z>0$, we have
\begin{equation*}
\begin{split}
\PP \left(- \{\nabla \cL_{\tau} (\Ab^*)\}_{jk}   \ge v_{\delta} \tau z   \right) &=
\PP\left( \sum_{i=1}^n X_{ij} \psi (E_{ik}/\tau) \ge  n v_{\delta} z   \right) \\
&\le \frac{ \EE \left\{ \exp\left( \sum_{i=1}^n X_{ij} \psi (E_{ik}/\tau) \right)\right\}}{\exp (nv_{\delta} z)}  \\
&\le  \exp \left\{ -nv_{\delta} (z-\tau^{-(1+\delta)}) \right\}\\
&\le  \exp \left\{ -nv_{\delta}z/2  \right\},
\end{split}
\end{equation*}
where the last inequality holds by picking $\tau \ge (2/z)^{1/(1+\delta)}$. Similarly, it can be shown that $\PP \left( \{\nabla \cL_{\tau} (\Ab^*)\}_{jk}   \ge v_{\delta} \tau z   \right) \le \exp \left\{ -nv_{\delta}z/2  \right\}$.
  Then, by the union bound, we have
\begin{equation}
\label{eq:lemma0:4}
\begin{split}
\PP \left( \|\nabla \cL_{\tau} (\Ab^*)\|_{\infty,\infty}   \ge v_{\delta} \tau z  \right)&\le \sum_{j=1}^p \sum_{k=1}^q \PP \left( |\{\nabla \cL_{\tau} (\Ab^*)\}_{jk}|     \ge v_{\delta} \tau z  \right)\\
&\le 2 pq \exp (-nv_{\delta} z /2 ).
\end{split}
\end{equation}
Picking $z = (6/v_{\delta}) \log (pq)/n$ and $\tau \ge  \{(nv_{\delta})/(3\log (pq))\}^{1/(1+\delta)}  $ , we obtain
\[
\PP \left( \|\nabla \cL_{\tau} (\Ab^*)\|_{\infty,\infty}   \ge v_{\delta} \tau z  \right) \le \frac{1}{pq},
\]
 implying 
 \[
 \|\nabla \cL_{\tau} (\Ab^*)\|_{\infty,\infty}  \le 6^{\delta/(1+\delta)} (2v_{\delta})^{1/(1+\delta)} \left( \frac{\log (pq)}{n}\right)^{\delta/(1+\delta)}
 \]  
with probability at least $1-(pq)^{-1}$.

For $\delta >1$, instead of the inequality in~\eqref{eq:lemma0:2}, we use 
\[
- \log \left(1-u+|u|^{2}\right) \le \psi (u) \le \log \left(1+u+|u|^{2}\right).
\]
Following a similar argument, we arrive at
 \[
 \|\nabla \cL_{\tau} (\Ab^*)\|_{\infty,\infty}  \le 12^{1/2} v_{\delta}^{1/2} \left( \frac{\log (pq)}{n}\right)^{1/2}
 \]  
with probability at least $1-(pq)^{-1}$.  We obtain the desired results by combining both cases when $0<\delta\le 1$ and $\delta >1$.

\end{proof}

%%%%%%%%%%%%%%%%%%%%%
% Proof of l1 cone
%%%%%%%%%%%%%%%%%%%%%
\subsection{Proof of Lemma~\ref{lemma:1}}
\label{appendix:proof of lemma1}
\begin{proof}
Recall that $\cS$ is the support of $\Ab^*$.  Under the condition that $\|\nabla \cL_{\tau}(\Ab^*)\|_{\infty,\infty} \le \lambda/2$, we will show that 
\[
\big\|(\hat{\Ab}-\Ab^*)_{\cS^c}\big\|_{1,1}\leq \frac{2\gamma+5}{2\gamma-5}\big\|(\hat{\Ab}-\Ab^*)_\cS\big\|_{1,1}.
\]

By the first order optimality condition of (\ref{eq:opt:conv}), there exists $\tilde{\Nb}\in \partial \|\hat{\Ab}\|_*$ and $\tilde{\bGamma} \in \partial \|\hat{\Ab}\|_{1,1}$ such that 
\begin{equation}
\label{eq:prooflemma1:1}
\langle \nabla \cL_{\tau}(\hat{\Ab}) + \lambda (\tilde{\Nb}+ \gamma \tilde{\bGamma}), \hat{\Ab}-\Ab^* \rangle=  0. 
\end{equation}
From (\ref{eq:symmetric bregman}), we have  $D_{\cL}^s (\hat{\Ab},\Ab^*)   = \langle  
\nabla \cL_{\tau}(\hat{\Ab})- \nabla \cL_{\tau}(\Ab^*),\hat{\Ab}-\Ab^*\rangle\ge 0$, implying
\begin{equation}
\label{eq:prooflemma1:2}
\langle  
\nabla \cL_{\tau}(\hat{\Ab}),\hat{\Ab}-\Ab^*\rangle  \ge \langle  \nabla \cL_{\tau}(\Ab^*),\hat{\Ab}-\Ab^*\rangle. 
\end{equation}
Substituting~(\ref{eq:prooflemma1:2}) into~(\ref{eq:prooflemma1:1}), we obtain 
\[
\langle \nabla \cL_{\tau}(\Ab^*) + \lambda (\tilde{\Nb}+ \gamma \tilde{\bGamma}), \hat{\Ab}-\Ab^* \rangle \le  0,
\]
or equivalently, 
\begin{equation}
\label{eq:prooflemma1:3}
\underbrace{\langle \nabla \cL_{\tau}(\Ab^*), \hat{\Ab}-\Ab^* \rangle}_{\mathrm{I}_1} + \underbrace{\lambda \langle \tilde{\Nb}, \hat{\Ab}-\Ab^* \rangle}_{\mathrm{I}_2} +   \underbrace{\lambda \gamma \langle\tilde{\bGamma}, \hat{\Ab}-\Ab^* \rangle}_{\mathrm{I}_3} \le0,
\end{equation}
It remains to obtain lower bounds for $\mathrm{I}_1,\mathrm{I}_2$, and  $\mathrm{I}_3$.\\

Lower bound for $\mathrm{I}_1$: By the Holder's inequality and the condition that $\|\nabla \cL_{\tau}(\Ab^*)\|_{\infty,\infty}\le \lambda/2$, we can lower bound $\mathrm{I}_1$ by
\begin{equation}
\label{eq:prooflemma1:4}
\mathrm{I}_1\ge  - \|\nabla \cL_{\tau}(\Ab^*)\|_{\infty,\infty} \|\hat{\Ab}-\Ab^* \|_{1,1}\ge -(\lambda/2) \|\hat{\Ab}-\Ab^* \|_{1,1}.
\end{equation}

Lower bound for $\mathrm{I}_2$: Similarly, by the Holder's inequality, we have 
\begin{equation}
\label{eq:prooflemma1:5}
\mathrm{I}_2\ge  - \lambda \|\tilde{\Nb}\|_{\infty,\infty} \|\hat{\Ab}-\Ab^* \|_{1,1}\ge -\lambda\|\tilde{\Nb}\|_{2}  \|\hat{\Ab}-\Ab^* \|_{1,1} \ge-2 \lambda  \|\hat{\Ab}-\Ab^* \|_{1,1},
\end{equation}
were the second inequality holds using the fact that $\|\tilde{\Nb}\|_{\infty,\infty} \le \|\tilde{\Nb}\|_{2}$ and the last inequality holds by $\|\tilde{\Nb}\|_2 \le 2$.

%\scomment{This can be sharpened to $1$.} \\

Lower bound for $\mathrm{I}_3$: By the definition of the subgradient of an $\ell_1$ norm, we have $\langle \tilde{\bGamma}, \hat{\Ab}  \rangle = \|\hat{\Ab}\|_{1,1}$ and that $\|\tilde{\bGamma}\|_{\infty,\infty}\le 1$.  Thus, we have 
\begin{equation}
\label{eq:prooflemma1:6}
\begin{split}
\mathrm{I}_3  &= \lambda\gamma \langle \tilde{\bGamma}_{\cS}, (\hat{\Ab}-\Ab^*)_{\cS}   \rangle
+ \lambda\gamma \langle \tilde{\bGamma}_{\cS^c}, (\hat{\Ab}-\Ab^*)_{\cS^c}   \rangle\\
&\ge -\lambda \gamma\|(\hat{\Ab}-\Ab^*)_{\cS}\|_{1,1} + \lambda\gamma \langle \tilde{\bGamma}_{\cS^c}, (\hat{\Ab}-\Ab^*)_{\cS^c}   \rangle\\
&\ge -\lambda \gamma\|(\hat{\Ab}-\Ab^*)_{\cS}\|_{1,1} + \lambda\gamma \|(\hat{\Ab}-\Ab^*)_{\cS^c}  \|_{1,1},
\end{split}
\end{equation}  
where the second inequality follows from Holder's inequality and the last inequality follows from the fact that $\langle \tilde{\bGamma}_{\cS^c},\hat{\Ab}_{\cS^c}  \rangle = \|\hat{\Ab}_{\cS^c}\|_{1,1}$ and that $\Ab^*_{\cS^c} = \mathbf{0}$.\\

Substituting~(\ref{eq:prooflemma1:4}),~(\ref{eq:prooflemma1:5}), and~(\ref{eq:prooflemma1:6}) into~(\ref{eq:prooflemma1:3}), we obtain
\[
-(\lambda/2) \|\hat{\Ab}-\Ab^*\|_{1,1} -2\lambda \|\hat{\Ab}-\Ab^*\|_{1,1}-\lambda \gamma\|(\hat{\Ab}-\Ab^*)_{\cS}\|_{1,1} + \lambda\gamma \|(\hat{\Ab}-\Ab^*)_{\cS^c}  \|_{1,1}\le 0.
\]
After rearranging the terms, we have
\[
 \|(\hat{\Ab}-\Ab^*)_{\cS^c}\|_{1,1} \le \frac{2\gamma+5}{2\gamma-5}  \|(\hat{\Ab}-\Ab^*)_{\cS}\|_{1,1}.
\]
\end{proof}

%%%%%%%%%%%%%%%%%%%%%%%%%%%%
% Nuclear norm cone
%%%%%%%%%%%%%%%%%%%%%%%%%%%%
\subsection{Proof of Lemma~\ref{lemma:nucone}}
\label{appendix:proof of lemma nuclear}
\begin{proof}
From \eqref{eq:prooflemma1:1}--\eqref{eq:prooflemma1:4} in the proof of Lemma~\ref{lemma:1}, there exists $\tilde{\Nb} \in \partial \|\hat{\Ab}\|_*$ and $\tilde{\bGamma} \in \partial \|\hat{\Ab}\|_{1,1}$ such that  
\[
{\langle \nabla \cL_{\tau}(\Ab^*), \hat{\Ab}-\Ab^* \rangle}+ {\lambda \langle \tilde{\Nb}, \hat{\Ab}-\Ab^* \rangle}+{\lambda \gamma \langle\tilde{\bGamma}, \hat{\Ab}-\Ab^* \rangle} \le0.
\]
Moreover, by monotonicity of subdifferentials of convex functions, $\langle-\lambda(\widetilde \Nb- \Nb), \widehat \Ab -\Ab^*  \rangle\leq 0$, where $\Nb\in \partial \|\Ab^*\|_*$. Combining the above inequalities, we have
\begin{equation}
\label{eq:lemma:nuclear1}
\underbrace{ \lambda\langle  \Nb, \widehat\Ab-\Ab^*\rangle}_{\Rom{2}_1} +\underbrace{\lambda\gamma\langle \widetilde\bGamma, \widehat\Ab-\Ab^*\rangle}_{\Rom{2}_2} +\underbrace{\langle\nabla\cL(\Ab^*) , \widehat\Ab-\Ab^*\rangle}_{\Rom{2}_3}\leq 0. 
\end{equation}

Lower bound for $\Rom{2}_1$: Recall the sub-differential of the nuclear norm in (\ref{eq:subdiff:nuclear}). From \eqref{eq:subdiff:nuclear}, the subdifferential $\Nb$ can be written as 
\$
\Nb=\Ub \Vb^\T +\cP_{\cU^\perp}\Wb\cP_{\cV^\perp},~\textnormal{where}~\|\Wb\|_2\leq 1. 
\$
We choose $\Wb$ such that $\langle \cP_{\cU^\perp}\Wb\cP_{\cV^\perp}, \widehat \Ab-\Ab^* \rangle =\|\cP_{\cU^\perp}\widehat\Ab\cP_{\cV^\perp}\|_*$, and this  implies that 
\$
\Rom{2}_1&=\lambda\big\langle \Ub\Vb^\T+\cP_{\cU^\perp}\Wb\cP_{\cV^\perp}, \widehat\Ab-\Ab^* \big\rangle\\
&=\lambda\big\langle \Ub\Vb^\T, \cP_{\cU}(\widehat\Ab-\Ab^*)\cP_{\cV} \big\rangle +\lambda\big\langle \cP_{\cU^\perp}\Wb\cP_{\cV^\perp},  \widehat\Ab \big\rangle\\
&\geq \lambda\big\|\cP_{\cU^\perp}\widehat\Ab\cP_{\cV^\perp}\big\|_*-\lambda\big\|\cP_{\cU}(\widehat\Ab-\Ab^*)\cP_{\cV}\big\|_*. 
\$

Lower bound for $\Rom{2}_2$: using a similar argument to the proof of Lemma \ref{lemma:1}, we have 
\$
\Rom{2}_2  %&= \lambda\gamma \langle \tilde{\bGamma}_{\cS}, (\hat{\Ab}-\Ab^*)_{\cS}   \rangle
%+ \lambda\gamma \langle \tilde{\bGamma}_{\cS^c}, (\hat{\Ab}-\Ab^*)_{\cS^c}   \rangle\\
%&\ge -\lambda \gamma\|(\hat{\Ab}-\Ab^*)_{\cS}\|_{1,1} + \lambda\gamma \langle \tilde{\bGamma}_{\cS^c}, (\hat{\Ab}-\Ab^*)_{\cS^c}   \rangle\\
&\ge -\lambda \gamma\|(\hat{\Ab}-\Ab^*)_{\cS}\|_{1,1} + \lambda\gamma \|(\hat{\Ab}-\Ab^*)_{\cS^c}  \|_{1,1}. 
\$ 

Lower bound for $\Rom{2}_3$: using a similar argument to the proof of Lemma \ref{lemma:1},  we obtain that 
\[
\Rom{2}_3\geq -\big\|\nabla\cL_\tau(\Ab^*)\big\|_{\infty, \infty}\big\|\widehat\Ab-\Ab^*\big\|_{1,1} \ge -\frac{\lambda}{2}\big\|\widehat\Ab-\Ab^*\big\|_{1,1}.
\] 

Therefore, combining the lower bounds for $\Rom{2}_1$, $\Rom{2}_2$ and $\Rom{2}_3$ into \eqref{eq:lemma:nuclear1}, we obtain
\$
&\lambda\big\|\cP_{\cU^\perp}(\widehat\Ab-\Ab^*)\cP_{\cV^\perp}\big\|_*-\lambda\big\|\cP_{\cU}(\widehat\Ab-\Ab^*)\cP_{\cV}\big\|_*-\lambda\gamma \big\|\big(\widehat\Ab-\Ab^*\big)_\cS\big\|_{1,1}\\
&~~~+\lambda\gamma\big\|\big(\widehat\Ab-\Ab^*\big)_{\cS^c}\big\|-(\lambda/2) \|\widehat\Ab-\Ab^*\|_{1,1}\leq 0.
\$
By the assumption that $\gamma\geq 1/2$, the above equation simplifies to 
\$
\big\|\cP_{\cU^\perp}(\widehat\Ab-\Ab^*)\cP_{\cV^\perp}\big\|_*\leq \big\|\cP_{\cU}(\widehat\Ab-\Ab^*)\cP_{\cV}\big\|_*+(\gamma+0.5) \big\|\big(\widehat\Ab-\Ab^*\big)_\cS\big\|_{1,1}.
\$

\end{proof}

%%%%%%%%%%%%%%%%%%%%%
% Proof of Restricted Strong Convexity
%%%%%%%%%%%%%%%%%%%%%
\subsection{Proof of Lemma~\ref{lemma:2}}
\label{appendix:proof of lemma2}
\begin{proof}
Recall that 
\[
D_{\cL}^s (\Ab,\Ab^*)  =  \langle  
\nabla \cL_{\tau}({\Ab})- \nabla \cL_{\tau}(\Ab^*),{\Ab}-\Ab^*\rangle.  
\]
Let $\bDelta=\Ab -\Ab^*$.  By the mean value theorem, we have 
\[
D_{\cL}^s (\Ab,\Ab^*)   = \mathrm{vec}(\bDelta)^\T  \Hb_{\tau}  (\tilde{\Ab})  \mathrm{vec}(\bDelta),
\]
where $\tilde{\Ab}$ lies between $\Ab^*$ and $\Ab^*+\bDelta$. By Holder's inequality, we have
\[
D_{\cL}^s (\Ab,\Ab^*)   \ge \lambda_{\min} \left(\Hb_{\tau}  (\tilde{\Ab})\right)  \|\Ab-\Ab^*\|_\rF^2.
\]
It remains to show that $\lambda_{\min} (\Hb_{\tau}  (\tilde{\Ab}))$ is lower bounded by a constant.

Let $t\in [0,1]$.  Then, we can rewrite $\tilde{\mathbf{A}}$ as a convex combination of $\Ab^*$ and $\Ab^*+\bDelta$, i.e., $\tilde{\Ab} = t \Ab+ (1-t) \Ab^* $. Thus, by the triangle inequality, we have 
\[
\|\tilde{\Ab}-\Ab^*\|_{1,1} \le \|  t \Ab+ (1-t) \Ab^*  - \Ab^*\|_{1,1} \le t \|\Ab-\Ab^*\|_{1,1}\le t \eta.
\]  
Therefore, $\tilde{\Ab}\in \cC(m,\xi,\eta)$.   By Lemma~\ref{lemma:localized restricted}, we have $\lambda_{\min} (\Hb_{\tau}  (\tilde{\Ab})) \ge \kappa_{\mathrm{lower}}/2$ with probability $1-(pq)^{-1}$. Thus, 
\[
D_{\cL}^s (\Ab,\Ab^*)   \ge \frac{\kappa_{\mathrm{lower}}}{2}\|\Ab-\Ab^*\|_\rF^2.
\]

\end{proof}

%%%%%%%%%%%%%%%%%%%%%%%%%%%%%%%%%% 
%%%%%%%%%%%%%%%%%%%%%%%%%%%%%%%%%%
% Technical Lemmas
%%%%%%%%%%%%%%%%%%%%%%%%%%%%%%%%%% 
%%%%%%%%%%%%%%%%%%%%%%%%%%%%%%%%%%
\section{Technical Lemmas}
\label{appendix:tech}
\begin{lemma}[\sf Hoeffding's Inequality]\label{lemma:hoeffding}
Let $Z_1,\ldots,Z_n$ be independent random variables such that $\EE (Z_i)=\mu$ and $a\le Z_i \le b$.  Then, for any $z>0$,  
\[
\PP \left( \frac{1}{n}\sum_{i=1}^n Z_i \ge z+\mu \right) \le  \exp (-2nz^2/(b-a)^2).
\]
\end{lemma}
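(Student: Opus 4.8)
The plan is the standard Chernoff (exponential Markov) argument combined with Hoeffding's lemma on the moment generating function of a bounded mean-zero random variable, followed by a one-variable optimization. First I would center: set $W_i = Z_i - \mu$, so that $\EE(W_i) = 0$, the $W_i$ are independent, and $a - \mu \le W_i \le b - \mu$ with range $b-a$. The event in question becomes $\{\sum_{i=1}^n W_i \ge nz\}$, and for any $s>0$, Markov's inequality applied to $x \mapsto e^{sx}$ together with independence gives
\[
\PP\Big(\tfrac{1}{n}\textstyle\sum_{i=1}^n Z_i \ge z+\mu\Big) = \PP\Big(\textstyle\sum_{i=1}^n W_i \ge nz\Big) \le e^{-snz}\prod_{i=1}^n \EE\big(e^{sW_i}\big).
\]

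The key step is Hoeffding's lemma: if $W$ is mean-zero with $c \le W \le d$ (here $c = a-\mu \le 0 \le d = b-\mu$), then $\EE(e^{sW}) \le \exp\big(s^2(d-c)^2/8\big)$. I would prove this by convexity of $e^{sx}$ on $[c,d]$, which gives $e^{sW} \le \frac{d-W}{d-c}e^{sc} + \frac{W-c}{d-c}e^{sd}$; taking expectations and using $\EE(W)=0$ yields $\EE(e^{sW}) \le (1-p)e^{sc} + p e^{sd} = e^{\varphi(s)}$ with $p = \frac{-c}{d-c}\in[0,1]$ and $\varphi(s) = sc + \log(1-p+pe^{s(d-c)})$. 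A short computation shows $\varphi(0) = \varphi'(0) = 0$ and $\varphi''(s) = (d-c)^2\theta(1-\theta) \le (d-c)^2/4$ for $\theta = \theta(s) \in [0,1]$, so Taylor's theorem with remainder gives $\varphi(s) \le s^2(d-c)^2/8$. Plugging $d-c = b-a$ into the displayed bound leaves
\[
\PP\Big(\textstyle\sum_{i=1}^n W_i \ge nz\Big) \le \exp\Big(-snz + \tfrac{n s^2 (b-a)^2}{8}\Big).
\]

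Finally I would optimize the exponent over $s>0$: it is a convex quadratic in $s$, minimized at $s = 4z/(b-a)^2$, where the value equals $-2nz^2/(b-a)^2$, which is exactly the claimed bound. The only step with any real content for this classical statement is Hoeffding's lemma, specifically the uniform estimate $\varphi''(s) \le (b-a)^2/4$; everything else is routine Chernoff bounding and completing the square, so I expect no genuine obstacle.
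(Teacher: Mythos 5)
Your argument is the standard and correct proof of Hoeffding's inequality: centering, the Chernoff bound, Hoeffding's lemma via convexity and the second-derivative estimate $\varphi''(s)\le (b-a)^2/4$, and optimization over $s$. The paper states this lemma as a classical technical result without giving any proof, so there is nothing to compare against; your derivation fills that in correctly, and the optimized exponent $-2nz^2/(b-a)^2$ matches the stated bound exactly.
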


\begin{lemma}
\label{lemma:truncate concentration}
Let $X_1,\ldots,X_n$ be independent random variables with 
\[
\EE (X_i)= 0 \qquad \mathrm{and} \qquad v_{\delta} = \max_{i}~ \EE (|X_i|^{1+\delta}) <\infty~ \mathrm{for}~\delta>0.
\] 
For any $t\ge 0$ and $\tau>0$, we have 
\[
\PP \left( \frac{1}{n} \sum_{i=1}^n 1 (|X_i| > \tau/2 )   \ge  (2/\tau)^{1+\delta} v_{\delta} + \sqrt{t/n} \right) \le \exp (-2t).
\]

\end{lemma}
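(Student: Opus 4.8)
The plan is to treat this as an elementary two-step argument: first bound the expectation of each indicator by a moment (Markov) estimate, then apply Hoeffding's inequality (Lemma~\ref{lemma:hoeffding}) to the bounded random variables $1(|X_i| > \tau/2)$. No delicate probabilistic machinery is needed.

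First I would set $Z_i = 1(|X_i| > \tau/2)$, so the $Z_i$ are independent and take values in $\{0,1\}$. Applying Markov's inequality to $|X_i|^{1+\delta}$,
\[
\EE(Z_i) = \PP\big(|X_i| > \tau/2\big) \le \frac{\EE\big(|X_i|^{1+\delta}\big)}{(\tau/2)^{1+\delta}} \le (2/\tau)^{1+\delta} v_\delta,
\]
where the last step uses $v_\delta = \max_i \EE(|X_i|^{1+\delta})$. Averaging over $i$, the quantity $\bar\mu := n^{-1}\sum_{i=1}^n \EE(Z_i)$ therefore satisfies $\bar\mu \le (2/\tau)^{1+\delta} v_\delta$.

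Next, since $0 \le Z_i \le 1$ and the $Z_i$ are independent, Hoeffding's inequality — in the form of Lemma~\ref{lemma:hoeffding}, applied after centering each $Z_i$ by its own mean — yields, with $z = \sqrt{t/n}$ and $b-a = 1$,
\[
\PP\!\left( \frac{1}{n}\sum_{i=1}^n Z_i \ge \bar\mu + \sqrt{t/n} \right) \le \exp\!\big(-2n z^2\big) = \exp(-2t).
\]
Finally, because $\bar\mu \le (2/\tau)^{1+\delta} v_\delta$, the event $\{ n^{-1}\sum_i Z_i \ge (2/\tau)^{1+\delta} v_\delta + \sqrt{t/n}\}$ is contained in $\{ n^{-1}\sum_i Z_i \ge \bar\mu + \sqrt{t/n}\}$, so the same bound $\exp(-2t)$ applies, giving the claim.

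There is essentially no obstacle here; the only point worth a sentence is that Lemma~\ref{lemma:hoeffding} is phrased for variables sharing a common mean $\mu$, whereas the $\EE(Z_i)$ may differ across $i$. This is immaterial: Hoeffding's bound holds verbatim for independent bounded variables with arbitrary individual means once each $Z_i$ is centered by $\EE(Z_i)$, and the resulting averaged mean $\bar\mu$ is still controlled by $(2/\tau)^{1+\delta} v_\delta$ from the Markov step.
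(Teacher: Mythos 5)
Your proof is correct and follows essentially the same route as the paper's: Markov's inequality on $|X_i|^{1+\delta}$ to bound $\EE\big(1(|X_i|>\tau/2)\big)$ by $(2/\tau)^{1+\delta}v_\delta$, followed by Hoeffding's inequality applied to the $\{0,1\}$-valued indicators with $z=\sqrt{t/n}$. Your added remark about centering each $Z_i$ by its own mean (since the stated Hoeffding lemma assumes a common mean) is a small but legitimate clarification that the paper's own proof glosses over.
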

\begin{proof}
We first obtain an upper bound for $\EE (n^{-1} \sum_{i=1}^n1  (|X_i| > \tau/2 ) )$. By the Markov's inequality, we have 
\[
\EE \left(\frac{1}{n} \sum_{i=1}^n1  (|X_i| > \tau/2 ) \right) = \frac{1}{n}\sum_{i=1}^n \PP(|X_i| > \tau/2 ) = \frac{1}{n}\sum_{i=1}^n  \PP \left(|X_i|^{1+\delta} > (\tau/2)^{1+\delta} \right) \le (2/\tau)^{1+\delta} v_{\delta}.
\]
Let $Z_i = 1(|X_i| > \tau/2)$, $\mu = E(Z_i)$, and $z= \sqrt{t/n}$.  Note that $0 \le Z_i \le 1$.  
By Lemma~\ref{lemma:hoeffding}, we have 
\[
\PP \left( \frac{1}{n} \sum_{i=1}^n 1 (|X_i| > \tau/2 )   \ge  (2/\tau)^{1+\delta} v_{\delta} + \sqrt{t/n} \right) \le \exp (-2t),
\]
as desired.
\end{proof}

\end{document}